\documentclass[journal]{IEEEtran}

\ifCLASSINFOpdf
\usepackage[pdftex]{graphicx}
\graphicspath{{../pdf/}{../jpeg/}}
\DeclareGraphicsExtensions{.pdf,.jpeg,.png}
\else
\usepackage[dvips]{graphicx}
\graphicspath{{../eps/}}
\fi
\hyphenation{op-tical net-works semi-conduc-tor}

\usepackage{graphicx}
\usepackage{epstopdf}
\usepackage{bm}
\usepackage{diagbox}
\usepackage{multirow}
\usepackage[cmex10]{amsmath}
\usepackage[ruled]{algorithm2e} 
\usepackage{url}
\usepackage{lineno,hyperref}
\usepackage{amsthm}
\usepackage{booktabs}
\usepackage{subfigure}
\usepackage{booktabs}
\usepackage{url}
\usepackage{mathrsfs}
\usepackage{amsmath}
\usepackage{amsthm}
\usepackage{bbding}
\usepackage{amsfonts,amssymb}
\usepackage{mathrsfs}
\usepackage{amsmath}
\usepackage{amsthm}
\usepackage{bbding}
\usepackage{cite}
\usepackage{latexsym,amsmath,xcolor,multicol,booktabs,calligra}

\begin{document}
	\title{Robust Brain Tumor Segmentation with Incomplete MRI Modalities Using Hölder Divergence and Mutual Information-Enhanced Knowledge Transfer}
	\author{Runze Cheng$^{\dagger}$,
		Xihang Qiu$^{\dagger}$, 
		Ming Li$^{\dagger}$, Ye Zhang, Chun Li$^{*}$, ~\IEEEmembership{Member,~IEEE}, and F. Yu,~\IEEEmembership{Fellow,~IEEE}
		\thanks{This research was supported by the National Key Research and Development Program of China (No. 2022YFC3310300), Guangdong Basic and Applied Basic Research Foundation (No. 2024A1515011774), the National Natural Science Foundation of China (No. 12171036), Shenzhen Sci-Tech Fund (Grant No. RCJC20231211090030059), and Beijing Natural Science Foundation (No. Z210001). The code is available at the following \url{https://github.com/Lboy11/Brain-Tumor-Segmentation.git}.}
		\thanks{R. Cheng, X. Qiu, Y. Zhang, and C. Li are with MSU-BIT-SMBU Joint Research Center of Applied Mathematics, Shenzhen MSU-BIT University, Shenzhen, 518172, China. Email: 3120220916@bit.edu.cn; qiuxh@bit.edu.cn; ye.zhang@smbu.edu.cn.} 
        \thanks{R. Cheng and X. Qiu are also with Institute of Control Theory and Control Engineering, School of Automation, Beijing Institute of Technology, 100081, Beijing, China.}
        \thanks{Y. Zhang is with School of Mathematics and Statistics, Beijing Institute of Technology, 100081, Beijing, China.}
        \thanks{M. Li and F. Yu are with Guangdong Laboratory of Artificial Intelligence and Digital Economy (SZ), Shenzhen, 518083, China. E-mail: ming.li@u.nus.edu; yufei@gml.ac.cn.}
		\thanks{$^{\dagger}$Co-first authors. *Corresponding author: Chun Li (E-mail: lichun2020@smbu.edu.cn).}
	}
	\markboth{IEEE/CAA Journal of Automatica Sinica}%
	{Shell \MakeLowercase{\textit{et al.}}: Bare Demo of IEEEtran.cls for IEEE Journals}
	\maketitle

	\begin{abstract}
                Multimodal MRI provides critical complementary information for accurate brain tumor segmentation. However, conventional methods struggle when certain modalities are missing due to issues like image quality, protocol inconsistencies, patient allergies, or financial constraints. To address this, we propose a robust single-modality parallel processing framework that achieves high segmentation accuracy even with incomplete modalities. Leveraging Hölder divergence and mutual information, our model maintains modality-specific features while dynamically adjusting network parameters based on available inputs. By using these divergence and information-based loss functions, the framework effectively quantifies discrepancies between predictions and ground-truth labels, resulting in consistently accurate segmentation. Extensive evaluations on the BraTS 2018 and BraTS 2020 datasets demonstrate superior performance over existing methods in handling missing modalities, with ablation studies validating each component’s contribution to the framework.
	\end{abstract}
	
	\begin{IEEEkeywords}
		Missing modality learning, brain-tumor segmentation, divergence learning, knowledge distillation
		
	\end{IEEEkeywords}
	
	\IEEEpeerreviewmaketitle

\section{Introduction}
Brain tumors are aggressive diseases requiring early, accurate detection to improve treatment outcomes. Magnetic Resonance Imaging (MRI) is widely regarded as an essential tool for assessing brain tumors, thanks to its high-resolution soft tissue contrast and non-invasive nature \cite{pp27}. MRI-based brain tumor segmentation is critical in delineating healthy tissue from tumorous regions, which supports diagnosis, tracking tumor progression, and assessing treatment response. Typically, multimodal MRI—consisting of T1-weighted, T1-weighted post-contrast (T1c), T2-weighted, and FLAIR images—provides complementary insights, with each modality capturing specific tissue characteristics necessary for precise tumor boundary identification \cite{pp28,pp29,pp32}.

Most traditional segmentation methods assume access to all four MRI modalities. However, in clinical practice, it is common for one or more modalities to be unavailable, resulting in incomplete multimodal data. Missing modalities can occur due to various reasons, such as differences in imaging protocols, patient allergies to contrast agents, image quality issues from patient movement, or resource limitations in healthcare facilities \cite{p1}. These scenarios of missing modalities pose significant challenges for existing segmentation methods, which often experience performance drops without access to the full imaging data.

To address the missing-modality issue, two main approaches have emerged: modality-specific methods and unified single-model methods. In modality-specific approaches, separate models are trained for each missing modality scenario, allowing for knowledge transfer from well-trained multimodal networks to those designed for limited modalities. For instance, SMU-Net \cite{p5} utilized a teacher-student framework, where a multimodal teacher network transfers knowledge to unimodal student networks. ProtoKD \cite{p6} combined knowledge distillation with prototype learning to capture essential data patterns, while MMCFormer \cite{p7} employed transformers with auxiliary tokens for modality-specific representation transfer. Although these methods can achieve high segmentation accuracy, they are computationally intensive, especially with more modalities, as each missing-modality case requires a unique model, leading to exponential growth in the number of models.

The second approach addresses all missing-modality scenarios with a unified model, typically by projecting each modality through independent encoders into a shared latent space for feature fusion. For instance, RFNet \cite{p44} adopts a region-aware fusion strategy; Ting and Liu \cite{p47} introduce modality-specific encoders coupled with a shared decoder; and Wang et al. \cite{p9} combine shared and modality-specific features to enhance segmentation performance. Recent methods, including GGMD \cite{p48}, MAVP \cite{pp11}, and QuMo \cite{pp12}, further aim to improve robustness under missing modalities. Specifically, GGMD applies gradient-guided modality decoupling to reduce overreliance on dominant modalities but neglects optimization for asymmetric data distributions and underutilizes cross-modal complementarity. MAVP employs a modality-status classifier to generate visual prompts that guide feature fusion, yet it is susceptible to classifier inaccuracies and does not address data distribution asymmetry. QuMo adapts to missing modalities via learnable queries but fails to adequately mitigate dominant modality effects and lacks explicit metrics for handling asymmetry. In summary, although unified models effectively reduce computational overhead, they typically struggle to maintain robust and consistent performance under complex missing-modality conditions or imbalanced data distributions.

To overcome these limitations, we propose a single-modality parallel processing framework \cite{pp34,pp35,pp36,pp37} that integrates Hölder divergence and mutual information for enhanced robustness in missing-modality scenarios. Inspired by Chang et al. \cite{p36} on mutual information-based learning and recent advancements in high mutual information knowledge transfer \cite{p52}, our approach processes each modality independently through a shared network backbone, preserving modality-specific information and allowing flexible handling of missing modalities. Specifically, we introduce a mutual information-based metric with Hölder divergence \cite{p37} to robustly measure discrepancies between predicted and true segmentation maps, promoting effective feature alignment across modalities.

The main contributions of this work include: \textbf{1. Novel Network Architecture: A parallel 3D U-Net-based structure that processes each modality independently and dynamically adjusts network parameters for missing modalities, achieving high segmentation accuracy even with incomplete data. } \textbf{2. Enhanced Robustness with Hölder Divergence and Mutual Information: By incorporating Hölder divergence and mutual information into the loss function, our method effectively measures prediction-label discrepancies, improving robustness against data distribution asymmetries.} \textbf{3. Extensive Validation and Ablation Studies: Comprehensive evaluations on the BraTS 2018 and BraTS 2020 datasets demonstrate that our framework consistently outperforms existing methods across various missing-modality scenarios. Ablation studies further validate each component’s significance in achieving robust segmentation.}

In conclusion, this study introduces a robust solution to the missing-modality problem in brain tumor segmentation, addressing both computational efficiency and segmentation accuracy challenges. By harnessing the benefits of Hölder divergence and mutual information for effective knowledge transfer, our framework offers a resilient approach for multimodal segmentation in diverse clinical environments. This work is an extended version of the conference paper \textbf{Robust Divergence Learning for Missing-Modality Segmentation \cite{pp19}}, which has been accepted for presentation at \textbf{the China Automation Congress 2024}. Compared to the conference version, this work extends the original work in three primary aspects: (1). We redesign and present a more rigorous and systematic network architecture diagram to improve the clarity and precision of the proposed methodology; (2). We conduct more extensive experimental analyses, comprehensively benchmarking our approach against state-of-the-art methods addressing the missing modality segmentation problem, thus verifying its superiority and state-of-the-art performance; and (3), We provide an in-depth theoretical analysis, including Theorems 1 through 7, which rigorously demonstrates the validity and robustness of the proposed method, significantly enhancing the methodological rigor and theoretical foundation of this manuscript.

\section{Related Works}

\subsection{Incomplete Multi-Modal Brain Tumor Segmentation} 
Deep learning has been widely applied in various scenarios such as intelligent transportation \cite{li2021self,li2021exploiting,Yan_2025_CVPR}, cross-modal learning \cite{Liu_2025_CVPR, zhao2025favchat, li2025uni}, privacy protection \cite{li2023dr,li2023stprivacy}, and artificial intelligence generated content \cite{li2024instant3d, liu2024realera,zhuang2025vistorybench}. 
For brain tumor segmentation, compared to methods that assumed all modalities are available \cite{p2}, segmentation approaches handling missing modalities faced greater challenges but offered more practical flexibility for real-world applications. For example, HeMIS \cite{p15} addressed the issue of missing modalities by learning modality embeddings and mapping them into a shared latent space, effectively avoiding traditional data imputation techniques. ACN \cite{p10} and SMU-Net \cite{p5} introduced strategies to infer missing features from complete modality models. ACN utilized a teacher-student framework for each missing modality, while SMU-Net reconstructed missing information from a complete modality network through style-matching mechanisms. mmFormer \cite{p45} enhanced model robustness by employing independent convolutional encoders, an additional transformer module, and auxiliary regularizers tailored for incomplete data scenarios. Similarly, Chen et al. \cite{pp12} proposed a transformer-based end-to-end model that tackled missing multi-modal issues by leveraging a single autoencoder with learnable modality combination queries and a retraining strategy, significantly improving segmentation performance on small datasets with incomplete data. Swin UNETR \cite{pp33} employs a hierarchical Swin Transformer encoder that transforms multimodal input data into serialized feature embeddings. It applies a shifted-window strategy to compute self-attention, effectively capturing features across multiple scales. These features are then connected to a fully convolutional neural network decoder through skip connections, enhancing the model’s capability for multi-scale perception. MMCFormer \cite{p7} introduces an efficient 3D Transformer module, a global context interaction module, auxiliary tokens, and a feature consistency loss to effectively strengthen multi-scale feature representation and maintain feature consistency during training.

Most of these approaches primarily addressed the missing modality problem by reconstructing missing modalities or incorporating complex network structures with self-attention mechanisms. However, they often neglected the need to learn features unique to each modality while concurrently extracting shared features. To resolve this limitation, Wang et al. \cite{p9} proposed a method that simultaneously learned shared and modality-specific features by combining distribution alignment, domain classification, and residual feature fusion. This approach effectively utilized available modalities and managed missing modalities across various tasks. SFusion \cite{pp14} employed self-attention and modality-attention mechanisms to fuse available modalities without imputation, achieving efficient shared representation of multi-modal features. $\text{M}^2\text{FTrans}$ \cite{pp15} introduced trainable fusion tokens, masked self-attention, spatially weighted attention, and channel-level fusion transformers to address missing modality challenges. Knower et al. \cite{pp9} applied meta-learning and adversarial learning strategies to optimize the use of limited samples, effectively enhancing modality-agnostic representations. Qiu et al. \cite{pp11} developed the Modality-Aware Visual Prompt (MAVP) framework, which generated prompts through a modality state classifier to better handle missing modality scenarios. GSS \cite{pp13} proposed a class-aware group self-support learning framework that leveraged modality complementarity and inter-modality collaboration to reduce bias and compensate for information loss effectively.

Despite progress made by existing methods in handling missing modalities, many relied on complex architectures, reconstruction mechanisms, or self-attention modules, resulting in significant computational overhead. Additionally, these methods often struggled to accurately identify and integrate unique information from individual modalities. To address these challenges, we proposed a streamlined 3D-UNet architecture with a single-modality parallel network that processed each modality independently within a shared backbone. This design allowed the model to retain modality-specific features, ensuring accurate segmentation even in the absence of certain modalities.

\subsection{Hölder Divergence Learning}
Hölder divergence \cite{p37} has garnered significant attention in recent years for its flexibility and effectiveness in measuring differences between probability distributions, particularly in tasks involving asymmetric distributions or noise. Hoang et al. \cite{p11} explored the application of Cauchy-Schwarz divergence, a specific form of Hölder divergence, for analyzing Poisson point processes, providing a theoretical foundation for its use in statistical and machine learning tasks. Frank et al. \cite{p12} demonstrated that Hölder divergence enhanced clustering accuracy in K-means clustering, outperforming traditional distance metrics.

In our study, we applied Hölder divergence to 3D medical image segmentation, utilizing its ability to flexibly control the measurement of distributional differences for optimizing model parameters. This approach enabled sensitivity to significant discrepancies while maintaining robustness to noise, ensuring precise segmentation performance even under conditions of missing or incomplete modality data.

\begin{figure*}[hbt!]
	\centering
	\includegraphics[width=0.8\linewidth]{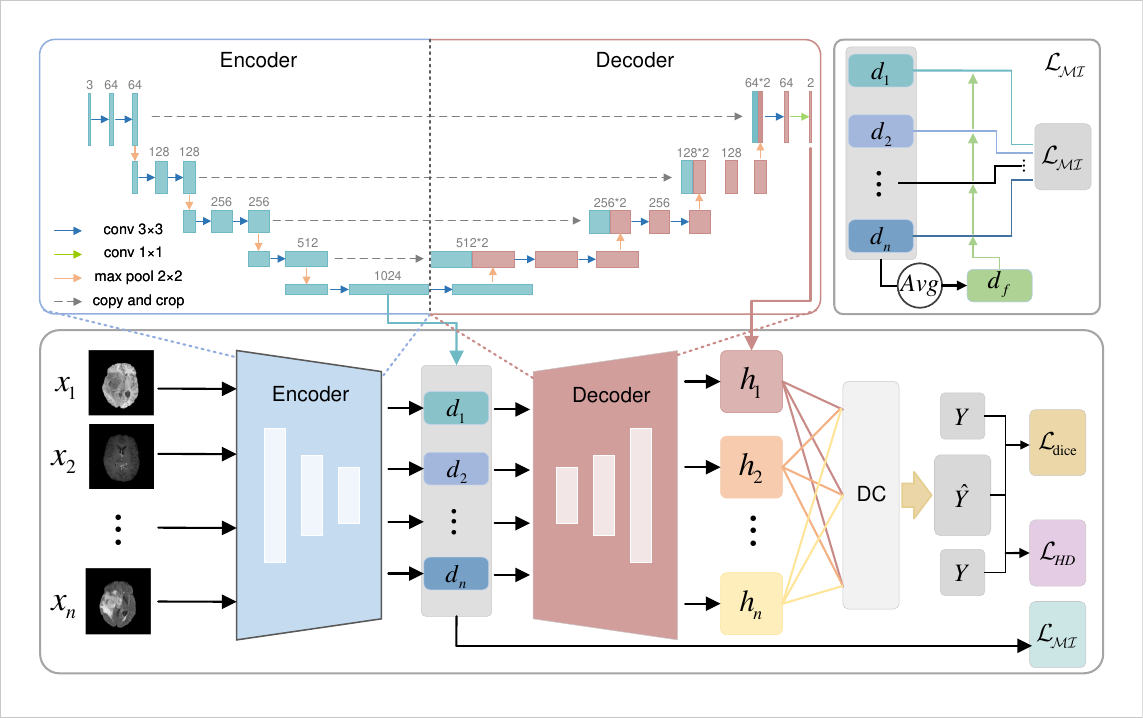}
	\caption{Framework for Robust Brain Tumor Segmentation with Incomplete MRI Modalities Using Hölder Divergence and Mutual Information-Enhanced Knowledge Transfer. This figure presents the structure of the proposed divergence-based learning framework, specifically designed to tackle segmentation challenges in scenarios with missing MRI modalities.}
	\label{fig:_1}
\end{figure*}
\subsection{Knowledge Distillation Techniques}
Knowledge distillation, originally proposed by Hinton et al. \cite{p53}, operated as a teacher-student framework, where a smaller student model learned from a larger teacher model by replicating its “dark knowledge,” which represented inter-class similarity information \cite{p53,pp30,pp31}. In multimodal segmentation tasks involving missing modalities, methods such as ACN \cite{p10}, SMU-Net \cite{p5}, and mmFormer \cite{p7} successfully transferred “deep knowledge” from a full-modal network to a missing-modal network through co-training. However, these co-trained networks were often limited by fixed relationships between a complete modality and a specific missing modality, reducing their adaptability to more diverse applications.

To address these limitations, recent studies introduced innovative knowledge distillation techniques to improve the sharing of semantic information and enhance model robustness. For instance, $\text{M}^3\text{AE}$ \cite{p46} employed a self-distillation mechanism that extracted and shared common semantics across various missing-modality scenarios within a single network, improving segmentation performance while reducing resource consumption. ShaSpec \cite{p9} separated shared features from modality-specific features and combined domain classification with distribution alignment objectives, making the shared features more robust in multimodal environments. GSS \cite{pp13} adopted a self-supported student group as the teacher network, reducing computational costs by eliminating the dependence on large teacher networks.

Our approach addressed the missing modality issue by combining the deepest latent features of each modality within a single-modality parallel architecture based on 3D U-Net. We computed the average of these latent features to create a unified representation for cross-modality knowledge distillation. This strategy effectively managed missing-modality scenarios while enhancing model performance by preserving both modality-specific and shared features. By leveraging this design, our method overcame the limitations of traditional teacher-student frameworks that relied on fixed modality relationships, offering a more flexible and robust solution for missing-modality segmentation.
\begin{table}[t]
	\centering
	\caption{Main Notations Used in This Work. This table provides an overview of the main notations used throughout this work, offering a concise reference for understanding the symbols and terminology employed in the algorithms discussed.}
	\setlength{\tabcolsep}{3pt}
	\begin{tabular}{p{1.8cm}<{\raggedright}|p{6cm}<{\raggedright}} 
		\toprule [1.0pt]
		Notation&Definition\\
		\midrule [0.5pt]
		$x_i$&  The $i^{th}$ modality data of the sample. \\
		$d_i$ & The deepest-level feature of the $i^{th}$ modality. \\
		$d_f$ & The deepest-level full-modality feature. \\
		$h_i$ & The generated single-modality representation of the $i^{th}$ 
modality. \\
		$\widehat{Y}$ & Integrated output under missing modalities. \\
        $Y$ & real sample. \\
        $p(d_f\mid d_m)$ & The conditional distribution of the feature f given the missing modality information $m$.\\
        $q(d_f\mid d_m)$ & The conditional distribution approximated using variational methods.\\

		\bottomrule[1.0pt]
	\end{tabular}
	\label{tab-001}
\end{table}

\section{Methodology}

\subsection{Knowledge Distillation for Segmentation Using Hölder Divergence} 

The main notations used in this work is shown in Table \ref{tab-001}. Brain tumor segmentation, particularly glioma segmentation, involves distinguishing four categories: background, whole tumor, tumor core, and enhancing tumor. Missing modalities can degrade segmentation accuracy. The Hölder divergence is employed for its flexibility and robustness, making it suitable for complex models and non-symmetric data. It supports brain tumor segmentation under missing modalities, maintaining high accuracy in clinical settings.

The loss function using Hölder divergence is: 

\begin{equation}
\frac{1}{D\times H\times W}\sum_{dhw} D_{\alpha}^{H}(\sigma(\mathbf{S}^{p}_{dhw})|\sigma(\mathbf{S}^{l}_{dhw})),
\end{equation}
where $\mathbf{S}^{p}_{dhw}$ and $\mathbf{S}^{l}_{dhw}$ are predicted and label probabilities for pixel $(d, h, w)$, and $D_{\alpha}^{H}$ denotes Hölder divergence, the definition is shown in Definition \ref{def_1}:
\newtheorem{definition}{\bf{Definition}}
\begin{definition} 
	\label{def_1}
	(\textbf{Hölder Statistical Pseudo-Divergence, HPD \cite{p37}}) HPD pertains to the conjugate exponents $\alpha$ and $\beta$, where $\alpha \beta>0$. In the context of two densities, $p(x) \in {L^\alpha }\left( {\Omega,\nu } \right)$ and $q(x) \in {L^\beta }\left( {\Omega ,\nu } \right)$, both of which belong to positive measures absolutely continuous with respect to $\nu$, HPD is defined as the logarithmic ratio gap, as follows: 
\begin{equation}
D_{\alpha}^{H}(p(x):q(x))=-\log\left(\frac{\int_{\Omega}p(x)q(x)\mathrm{d}x}{\left(\int_{\Omega}p(x)^{\alpha}\mathrm{d}x\right)^{\frac1\alpha}\left(\int_{\Omega}q(x)^{\beta}\mathrm{d}x\right)^{\frac1\beta}}\right),
\end{equation}
when $0<\alpha<1$ and $\beta  = \bar \alpha  = \frac{\alpha }{{\alpha  - 1}} < 0$ or $\alpha<0$ and $0<\beta<1.$
\end{definition}

To better highlight its strengths, we emphasize that Hölder divergence provides greater flexibility than traditional divergence measures such as KL divergence or Jensen-Shannon divergence, especially for handling asymmetric distributions. Its mathematical formulation allows for finer control of penalty terms through the conjugate exponents $\alpha$ and $\beta$, enabling adaptation to various data distributions. For example, lower $\alpha$ values improve robustness against outliers, which is critical in clinical scenarios with noisy data.

\begin{algorithm}[t]
	\caption{\small Robust Brain Tumor Segmentation with Incomplete MRI Modalities Using Hölder Divergence and Mutual Information-Enhanced Knowledge Transfer.}
	\label{alg:spl}
	\DontPrintSemicolon
	\small
	\tcp*[f]{\textbf{*Training*}}\\
	\textbf{Input:} Multi-Modality Dataset: $D = \left\{ \left\{ {\rm X}_m^n \right\}_{n = 1}^N, {y_m} \right\}_{m = 1}^M$;\\
	\textbf{Initialization:} Initialize the parameters of the neural network.\\
	\While{not converged}  
	{   
	    (1) Extract the deep features of each single modality and their corresponding segmentation results:\\
         \For{i = 1 to N}
         {
          $d_i \leftarrow f_i(x_i)$ \\
          $h_i \leftarrow T(d_i; \theta)$  
         }
          
        (2) Extract the deep latent features across all modalities: \\
        $d_f \leftarrow \text{Mean}(\{d_1, d_2, \ldots, d_N\})$; \\
       
        (3) Obtain the deep features and corresponding segmen-\\tation results under different missing modality conditio-\\ns:\\
         \For{l in \text{reversed(range(1, M + 1))}}
         {
            \For{subset in \text{itertools.combinations(range(M), l)}}
            {
                $d_m \leftarrow \text{Mean}(\{d_i \,|\, i \in \text{subset}\})$;\\
                $\widehat{Y} \leftarrow \text{Mean}(\{h_i \,|\, i \in \text{subset}\})$;
            }
         }
        
        (4) Compute the total loss:\\
        $\mathcal{L}_{all} \leftarrow \mathcal{L}_{Dice}(\widehat{Y}, Y) + \lambda_1 \mathcal{L}_{\mathcal{MI}}(d_f, d_m) + \lambda_2 \mathcal{L}_{HD}(\widehat{Y}, Y)$; \\
	}
	\textbf{Output:} Segmentation map.\\
	\tcp*[f]{\textbf{*Testing*}}\\
	Evaluate segmentation using the Dice Similarity Coefficient (DSC).\\
\end{algorithm}

\subsection{High Mutual Information Knowledge Transfer Learning}

In clinical practice, missing modality segmentation poses significant challenges, including incomplete information, limited model generalization, and restricted data applicability. To address these issues, we evaluate our approach under real-world missing modality scenarios, providing robust validation beyond controlled datasets. Our method leverages knowledge transfer between full and missing modalities, maximizing the use of available information to compensate for data loss. This enhances model accuracy and stability even with incomplete datasets. The approach extracts feature vector pairs from the encoder layers of both full and missing modality pathways. By calculating entropy and conditional entropy for each pair, we quantify mutual information, measuring how the full modality pathway reduces uncertainty associated with the missing modality. A variational information maximization technique is employed to estimate these mutual information values accurately.

Specifically, we extract $K$ pairs of feature vectors $\left\{ \left( d_f^{(k)}, d_m^{(k)} \right) \right\}_{k=1}^K$ from the encoder layers of both the full and missing modality paths. By calculating the entropy $H(d_f)$ and conditional entropy $H(d_f\mid d_m)$ for each pair, we derive the mutual information 
\begin{equation}
MI(d_f;d_m)=H(d_f)-H(d_f \mid d_m),
\end{equation}
which shows how the full modality path reduces uncertainty given the missing modality information. To estimate these mutual information values accurately, a variational information maximization method \cite{p54} is employed.

We approximate the conditional distribution $p(d_f\mid d_m)$ with the variational distribution $q(d_f\mid d_m) $ to optimize the layer-wise mutual information. The optimization process is defined by the following loss function
\begin{equation}
\mathcal{L}_{\mathcal{MI}}=
-\sum_{k=1}^{K} \gamma_{k} \mathbb{E}_{d_f^{(k)}, d_m^{(k)} \sim p\left( d_f^{(k)}, d_m^{(k)} \right)} 
                           \left[ \log q\left( d_f^{(k)} \mid d_m^{(k)} \right) \right].
\end{equation}

In our framework, the parameter $\gamma_{k} $ increases with the layer level $k$, reflecting the richer semantic information in higher network layers. This ensures effective knowledge transfer by assigning higher weights to these layers. The implementation of the variational distribution $ - \log q(d_f\mid d_m)$ is given by: 
\begin{equation}
\sum_{c=1}^{C} \sum_{h=1}^{H} \sum_{w=1}^{W} \left( \log \sigma_c + \frac{\left( d_f^{c,h,w} - \mu^{c,h,w}(d_m) \right)^2}{2\sigma_c^2} \right) + \text{constant,} 
\end{equation}
where \( \mu(\cdot) \) and \( \sigma \) represent the heteroscedastic mean and homoscedastic variance of the Gaussian distribution, respectively. \( W \) and \( H \) denote the width and height of the image, \( C \) represents the number of channels, and \( \mathrm{constant} \) is a fixed term.

\subsection{Overall Framework}
The proposed framework processes multimodal data by encoding samples into a unified feature space through channel encoders. Each modality is processed independently using a shared backbone network to generate modality-specific representations. A Dynamic Combination (DC) module integrates these representations, adapting to scenarios with missing modalities. For a clearer understanding, Fig. \ref{fig:_1} visually illustrates the framework, showcasing the mechanisms of the DC module and its handling of various scenarios. The optimization process is guided by a Dice loss function, ensuring consistency between the predicted segmentation and target labels. The total loss incorporates contributions from the Dice loss, mutual information loss, and Hölder divergence loss.  The computational complexity arises primarily from mutual information and Hölder divergence calculations. To maintain scalability, GPU acceleration is employed, achieving competitive training times compared to existing methods while improving segmentation performance under missing modality conditions.

Let $X$ and $Y$ denote samples from a multimodal dataset, where $X=\{x_j\}_{j=1}^M$ consists of $M$ samples, each containing $N$ modalities: $x_j = \{x_j^i\}_{i=1}^N$, with $ x_j^i \in X$ representing the $i^{th}$ modality of the $ j^{th}$ sample. The corresponding label is $y_j$. To leverage features from different modalities, a parallel 3D U-Net-based network is designed. Data from each sample is first encoded into a common feature space by channel encoders $f_i$, creating unified representations across channels. The modality-specific data is then processed through a parameterized shared backbone network $T(\cdot; \theta)$, yielding unique single-modality representations $h_i = T(f_i(x_i); \theta).$

The final output is generated using the DC module. For samples with missing modalities, the module excludes their representations and employs a flexible fusion operator $ M(\cdot)$ to integrate the remaining single-modality representations $H \subseteq \{h_1, h_2, \ldots, h_n\}$: $\widehat{Y} = M(H).$

The Dice loss function \cite{p49} is employed to optimize the consistency between the fused prediction $\widehat{Y}$ and the target label $Y$, ensuring pixel-wise accuracy during training: 
\begin{equation}
\mathcal{L}_{Dice}(\widehat{Y}, Y) = 1 - \frac{2}{J} \sum_{j=1}^{J} \frac{\sum_{i=1}^{I} \widehat{Y}_{i,j} Y_{i,j}}{\sum_{i=1}^{I} \widehat{Y}_{i,j}^2 + \sum_{i=1}^{I} Y_{i,j}^2}, 
\end{equation}
where $I$ is the total number of voxels, $J$ is the number of classes, $\widehat{Y}_{i,j}$ is the predicted probability of voxel $i$ belonging to class $j$, and $Y_{i,j}$ is the one-hot encoded label.  

To address the challenges of medical image segmentation with missing modalities, a mutual information loss $\mathcal{L}_{\mathcal{MI}}$, inspired by Hinton's knowledge distillation \cite{p53}, is introduced to enhance the model's accuracy. The total loss is defined as: 
\begin{equation}
\mathcal{L}_{All} = \mathcal{L}_{Dice}(\widehat{Y}, Y) + \lambda_1 \mathcal{L}_{\mathcal{MI}} + \lambda_2 \mathcal{L}_{HD}(\widehat{Y}, Y),
\end{equation}
where $\lambda_1 > 0$ and $\lambda_2 > 0$ are weighting parameters. A summary of the algorithm is provided in Algorithm \ref{alg:spl}.

\newtheorem{hypothesis}{Hypothesis}
\subsection{Theoretical Analysis}
\begin{hypothesis}
Processing each MRI modality independently allows for the retention of unique, modality-specific information relevant to brain tumor segmentation, thereby enhancing segmentation accuracy even when certain modalities are absent.
\end{hypothesis}

\begin{hypothesis}
Maximizing mutual information for knowledge transfer between fully available and missing modalities compensates for segmentation accuracy losses due to missing data.
\end{hypothesis}

\begin{hypothesis}
Using Hölder divergence to measure the discrepancy between predicted and target segmentation maps provides robustness to non-symmetric data distributions, thereby enhancing segmentation performance in scenarios with missing modalities.
\end{hypothesis}


\newtheorem{theorem}{Theorem}

\begin{theorem}[\textbf{Preservation of Modality-Specific Features via Unique Representations}]
Let $x_i$ denote the input from modality $i$, and let $h_i = T(f_i(x_i); \theta)$ represent the single-modality representation obtained by applying a shared network backbone $T$ parameterized by $\theta$ to the modality-specific encoding $f_i(x_i)$. Assume that the function $T \circ f_i$ is injective for each modality $i$. Then, $h_i \neq h_j$ for all $i \neq j$, ensuring the preservation of modality-specific features.
\end{theorem}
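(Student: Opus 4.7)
The plan is to argue by contradiction, combining the per-modality injectivity hypothesis with a structural disjointness property of the channel encoders. First I would make explicit what the architecture tacitly encodes about ``modality-specific'' encoders: each $f_i$ is tailored to the statistical signature of modality $i$ (e.g., T1c contrast enhancement versus FLAIR fluid attenuation), so the ranges of $f_i$ and $f_j$ for $i\neq j$ occupy disjoint subregions of the shared latent space. I would either promote this to a standing structural assumption on $\{f_i\}$, or deduce it from the fact that the channel encoders are parameter-distinct and trained to produce modality-discriminative embeddings.

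With disjointness in hand, the contradiction step is short. Suppose $h_i = h_j$ for some $i \neq j$. Then
\begin{equation}
T(f_i(x_i); \theta) = T(f_j(x_j); \theta).
\end{equation}
Per-modality injectivity of $T \circ f_i$, together with disjointness of the encoder ranges, upgrades to injectivity of $T$ on the relevant restricted domain, so the equality above forces $f_i(x_i) = f_j(x_j)$. But this contradicts the disjoint-range condition, so $h_i \neq h_j$ for every $i \neq j$, yielding preservation of modality-specific features. I would close by remarking that under the Dynamic Combination module this distinctness is precisely what allows $M(H)$ to route information from each available modality without collapsing onto a common latent attractor.

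The main obstacle, as I see it, is conceptual rather than computational: injectivity of $T \circ f_i$ for each individual $i$ is a within-modality statement and does not on its own prevent a cross-modality collision of the form $T(f_i(x_i);\theta) = T(f_j(x_j);\theta)$. Bridging this gap requires either an explicit disjointness hypothesis on $\{\mathrm{range}(f_i)\}$ or the stronger assumption that the total map $(i, x_i) \mapsto T(f_i(x_i);\theta)$ is globally injective. I therefore expect the bulk of the effort in writing this up to lie in surfacing and justifying this additional condition --- tying it to the parallel design of the shared backbone and the distinctness of the channel encoders --- rather than in the short contradiction derivation that follows once the condition is in place.
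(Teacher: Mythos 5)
Your diagnosis of the central difficulty is exactly right, and it is the same difficulty that the paper's own proof quietly steps over: per-modality injectivity of $T \circ f_i$ only rules out within-modality collisions, and the paper bridges the cross-modality case by asserting that $f_i(x_i) \neq f_j(x_j)$ (because the encoders are ``distinct'') and that applying $T$ to these distinct encodings yields distinct outputs ``since $T$ is injective when composed with $f_i$ and $f_j$.'' That last inference is not valid, and your proposal inherits a version of the same flaw. The step where you claim that per-modality injectivity of each $T \circ f_i$, together with disjointness of the encoder ranges, ``upgrades to injectivity of $T$ on the relevant restricted domain'' is false: take $f_i$ and $f_j$ with disjoint singleton ranges $\{u\}$ and $\{v\}$, $u \neq v$, and let $T(u;\theta) = T(v;\theta)$. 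Both composites are injective, the ranges are disjoint, yet $h_i = h_j$. Disjointness of $\mathrm{range}(f_i)$ and $\mathrm{range}(f_j)$ does guarantee $f_i(x_i) \neq f_j(x_j)$, but nothing in the hypotheses prevents the shared backbone $T$ from collapsing those two distinct latent codes onto one output; injectivity of $T$ restricted to each range separately says nothing about injectivity on their union.

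The repair that actually works is the second option you mention only in passing: assume that $T(\cdot;\theta)$ is injective on $\bigcup_i \mathrm{range}(f_i)$ (equivalently, that the total map $(i, x_i) \mapsto T(f_i(x_i);\theta)$ is injective), together with $f_i(x_i) \neq f_j(x_j)$ for $i \neq j$. Under that hypothesis the contradiction argument is immediate, but one should be honest that this essentially assumes the conclusion; the theorem as literally stated, with only per-modality injectivity, admits counterexamples, and the paper's proof has the same status as yours. If you write this up, state the strengthened hypothesis explicitly in the theorem rather than attempting to derive it from informal remarks about the encoders being parameter-distinct and modality-discriminative, since distinct parameters imply neither disjoint ranges nor non-collapse under the shared backbone.
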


\begin{proof}
Let $x_i$ be the input data from modality $i$, with $f_i(x_i)$ representing the modality-specific encoding function and $T(\cdot; \theta)$ the shared network backbone function. Then, $h_i = T(f_i(x_i); \theta)$ is the single-modality representation for modality $i$. Under the assumption that the composite function $T \circ f_i$ is injective for each modality $i$, different inputs from the same modality are mapped to unique representations, preserving modality-specific features \cite{pp1}. Furthermore, the encoding functions $f_i$ are distinct across modalities.

To show that $h_i \neq h_j$ for $i \neq j$, we consider that $f_i$ and $f_j$ are modality-specific encoders. For the same input $x$, $f_i(x) \neq f_j(x)$ because the encoders process the input differently according to modality-specific characteristics. Applying the shared network backbone $T$ to these distinct encodings results in different outputs: $h_i = T(f_i(x_i); \theta), \quad h_j = T(f_j(x_j); \theta).$ Since $f_i(x_i) \neq f_j(x_j)$ and $T$ is injective when composed with $f_i$ and $f_j$, it follows that $h_i \neq h_j$. Thus, the representations $h_i$ are unique to each modality, preserving modality-specific features.
\end{proof}

\begin{theorem}[\textbf{Enhancement of Segmentation Accuracy via Mutual Information Maximization}]
Let $d_f$ and $d_m$ denote the feature representations from the full-modality and missing-modality networks, respectively, and let $Y$ represent the segmentation labels. By maximizing the mutual information $MI(d_f; d_m)$, the conditional entropy $H(Y\mid d_m)$ is reduced, leading to improved segmentation accuracy in missing-modality scenarios.
\end{theorem}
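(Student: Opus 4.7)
The plan is to prove the theorem in two stages: first, establish a deterministic chain of inequalities that upper-bounds $H(Y \mid d_m)$ by a quantity strictly decreasing in $MI(d_f; d_m)$; second, convert this entropy reduction into a segmentation accuracy guarantee via a Fano-type bound. Throughout, I would treat the full-modality feature $d_f$ as a near-sufficient statistic for $Y$, so that $H(Y \mid d_f)$ is small and essentially fixed with respect to the missing-modality optimization; this is justified because the full-modality branch is trained to predict $Y$ accurately.

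For the first stage, I would introduce $d_f$ as an auxiliary variable inside $H(Y \mid d_m)$ and apply the chain rule of entropy. Starting from
\begin{equation}
H(Y \mid d_m) \le H(Y, d_f \mid d_m) = H(d_f \mid d_m) + H(Y \mid d_f, d_m),
\end{equation}
the monotonicity $H(Y \mid d_f, d_m) \le H(Y \mid d_f)$ together with the identity $H(d_f \mid d_m) = H(d_f) - MI(d_f; d_m)$ yields
\begin{equation}
H(Y \mid d_m) \le H(d_f) + H(Y \mid d_f) - MI(d_f; d_m).
\end{equation}
Since $H(d_f)$ and $H(Y \mid d_f)$ are essentially constants of the missing-modality optimization, every unit of increase in $MI(d_f; d_m)$ tightens this bound by the same amount, proving the first half of the claim.

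For the second stage, I would invoke Fano's inequality applied voxel-wise: letting $P_e^{\ast}$ denote the Bayes error of any predictor based only on $d_m$, one has $P_e^{\ast} \ge (H(Y \mid d_m) - 1)/\log|\mathcal{Y}|$, together with a matching Hellman--Raviv-type upper bound $P_e^{\ast} \le \tfrac{1}{2} H(Y \mid d_m)$. These bounds sandwich $P_e^{\ast}$ monotonically in terms of $H(Y \mid d_m)$, so driving the conditional entropy down through the bound of the first stage necessarily reduces the achievable per-voxel error, i.e.\ raises segmentation accuracy.

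The main obstacle will be the gap between the true mutual information $MI(d_f; d_m)$ and the quantity actually minimized by $\mathcal{L}_{\mathcal{MI}}$, which is only a variational lower bound obtained by replacing $p(d_f \mid d_m)$ with the Gaussian surrogate $q(d_f \mid d_m)$. I would address this by noting that, up to the modality-independent constant $H(d_f)$, one has
\begin{equation}
-\mathcal{L}_{\mathcal{MI}} = MI(d_f; d_m) - \mathbb{E}_{d_m}\!\left[D_{KL}\!\left(p(d_f \mid d_m)\,\|\,q(d_f \mid d_m)\right)\right],
\end{equation}
so minimizing $\mathcal{L}_{\mathcal{MI}}$ simultaneously raises $MI(d_f; d_m)$ and closes the variational gap; once this gap is small, the deterministic chain above applies and the accuracy guarantee follows. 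A secondary technicality is rigorously justifying the near-sufficiency of $d_f$, which I would handle by absorbing $H(Y \mid d_f)$ into the constant term of the bound as a small pre-training residual.
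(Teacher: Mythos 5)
Your proposal is correct and is substantially more rigorous than the paper's own argument, which proceeds verbally: the paper defines $MI(d_f;d_m)=H(d_f)-H(d_f\mid d_m)$, cites the data processing inequality $MI(Y;d_m)\le MI(Y;d_f)$ (implicitly assuming the Markov chain $Y \to d_f \to d_m$), and then simply asserts that because $Y$ depends on $d_f$, a reduction in $H(d_f\mid d_m)$ ``leads to'' a decrease in $H(Y\mid d_m)$ — no inequality connecting the two conditional entropies is ever written down, and no link from entropy to accuracy is given. Your chain $H(Y\mid d_m)\le H(Y,d_f\mid d_m)=H(d_f\mid d_m)+H(Y\mid d_f,d_m)\le H(Y\mid d_f)+H(d_f)-MI(d_f;d_m)$ is precisely the quantitative statement the paper gestures at, and it makes explicit the assumption the paper leaves implicit (that $H(Y\mid d_f)$ is small and fixed, i.e.\ $d_f$ is near-sufficient for $Y$). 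Your two additional stages — the Fano / Hellman--Raviv sandwich converting $H(Y\mid d_m)$ into bounds on the Bayes error, and the Barber--Agakov decomposition $-\mathcal{L}_{\mathcal{MI}} = MI(d_f;d_m)-\mathbb{E}_{d_m}[D_{KL}(p\,\|\,q)]$ up to the constant $H(d_f)$ — address two genuine gaps in the paper's proof (entropy reduction does not by itself certify accuracy improvement, and the loss optimized is only a variational surrogate for the mutual information). What the paper's version buys is brevity and consistency with the informal register of its other theorems; what yours buys is an actual inequality with identified hypotheses. One small caveat worth flagging in your write-up: the step $H(Y\mid d_m)\le H(Y,d_f\mid d_m)$ relies on $H(d_f\mid Y,d_m)\ge 0$, which holds for discrete or quantized $d_f$ but can fail for differential entropy of continuous features, so you should state that $d_f$ is treated as finite-precision; likewise, only the Hellman--Raviv upper bound (not the Fano lower bound) actually guarantees that decreasing $H(Y\mid d_m)$ decreases achievable error, so the ``sandwich'' phrasing should be tightened to lean on the upper bound.
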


\begin{proof}
Let $d_f$ be the feature representation derived from full modalities, $d_m$ the feature representation from missing modalities, and $Y$ the segmentation labels. Mutual information is defined as $MI(d_f; d_m) = H(d_f) - H(d_f \mid d_m)$ \cite{pp2}.

We assume that the segmentation labels $Y$ depend on the features $d_f$, while the features $d_m$ are intended to approximate $d_f$ to compensate for the missing modalities. Our objective is to demonstrate that maximizing $MI(d_f; d_m)$ decreases $H(Y \mid d_m)$, thereby improving segmentation accuracy.

According to the data processing inequality \cite{pp2}, $MI(Y; d_m) \leq MI(Y; d_f)$. By maximizing $MI(d_f; d_m)$, we make $d_m$ a closer approximation of $d_f$. Increasing $MI(d_f; d_m)$ effectively reduces $H(d_f \mid d_m)$, implying that $d_m$ retains more information about $d_f$. Since $Y$ depends on $d_f$, a reduction in $H(d_f \mid d_m)$ leads to a decrease in $H(Y \mid d_m)$, as the uncertainty in predicting $Y$ from $d_m$ diminishes. This lower conditional entropy $H(Y \mid d_m)$ corresponds to improved segmentation accuracy when $d_m$ is used in place of $d_f$.

Thus, by maximizing the mutual information between $d_f$ and $d_m$, we enhance the predictive power of $d_m$ with respect to $Y$, improving segmentation accuracy in scenarios with missing modalities.
\end{proof}

\begin{theorem}[\textbf{Suitability of Hölder Divergence for Non-Symmetric Distributions in Segmentation}]
\label{theorem_3}
Let $D_{\alpha}^{H}(p \| q) $ denote the Hölder divergence between two probability distributions $p(x)$ and $q(x)$, with conjugate exponents $\alpha > 1$ and $\beta = \frac{\alpha}{\alpha - 1}$. In segmentation tasks involving non-symmetric data distributions, Hölder divergence provides a robust measure of discrepancy that can capture asymmetries better than symmetric divergence measures.
\end{theorem}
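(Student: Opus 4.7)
My plan is to establish the theorem in three stages: first verify that $D_\alpha^H$ is a well-defined, non-negative divergence measure (so that ``discrepancy'' is meaningful), then establish its intrinsic asymmetry, and finally contrast this with symmetric divergences to formalize the ``better captures asymmetries'' claim.

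First, I would invoke Hölder's inequality directly on the numerator of the expression in Definition~\ref{def_1}. For conjugate exponents with $\alpha>1$ and $\beta=\alpha/(\alpha-1)$, Hölder's inequality gives
\[
\int_\Omega p(x)q(x)\,\mathrm{d}x \;\le\; \Bigl(\int_\Omega p(x)^\alpha\,\mathrm{d}x\Bigr)^{1/\alpha}\Bigl(\int_\Omega q(x)^\beta\,\mathrm{d}x\Bigr)^{1/\beta},
\]
so the argument of the logarithm lies in $(0,1]$, yielding $D_\alpha^H(p\|q)\ge 0$, with equality iff $p^\alpha$ and $q^\beta$ are proportional almost everywhere (the equality case of Hölder's inequality). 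This anchors the interpretation of $D_\alpha^H$ as a genuine discrepancy measure and matches the usage in the segmentation loss of Section~III.

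Next, I would establish the asymmetry by direct symbolic comparison. Swapping $p$ and $q$ in the definition yields
\[
D_\alpha^H(q\|p) = -\log\!\left(\frac{\int q(x)p(x)\,\mathrm{d}x}{\bigl(\int q(x)^\alpha \,\mathrm{d}x\bigr)^{1/\alpha}\bigl(\int p(x)^\beta \,\mathrm{d}x\bigr)^{1/\beta}}\right).
\]
The numerator is symmetric, but the denominators differ: $D_\alpha^H(p\|q)$ uses $\|p\|_\alpha\|q\|_\beta$ while $D_\alpha^H(q\|p)$ uses $\|q\|_\alpha\|p\|_\beta$. Since $\alpha\ne\beta$ whenever $\alpha\ne 2$, a short explicit example (e.g., two Bernoulli distributions with distinct biases) suffices to show $\|p\|_\alpha\|q\|_\beta \ne \|q\|_\alpha\|p\|_\beta$ generically, giving $D_\alpha^H(p\|q)\ne D_\alpha^H(q\|p)$. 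Thus the divergence is intrinsically directional.

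Finally, to formalize ``better captures asymmetries,'' I would define a symmetric divergence $D_{\mathrm{sym}}$ as any functional satisfying $D_{\mathrm{sym}}(p,q)=D_{\mathrm{sym}}(q,p)$ (encompassing Jensen--Shannon, symmetrized KL, etc.), and observe that such a functional necessarily collapses the two directional discrepancies into a single value, losing the information that $p$ may be, e.g., a prediction with mass in regions where $q$ (the label) has none, versus the reverse situation. In contrast, the Hölder pair $(D_\alpha^H(p\|q), D_\alpha^H(q\|p))$ separates these two regimes. I would additionally note that tuning $\alpha$ close to $1$ concentrates penalty on high-density regions (suppressing outlier influence) while $\alpha$ large amplifies tail discrepancies, providing the robustness claim. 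The main obstacle is that the stated theorem is semi-qualitative; the work is therefore in formulating a precise sense in which asymmetry is captured, which I would handle by exhibiting the explicit directional inequality above together with the trivial impossibility for any symmetric measure.
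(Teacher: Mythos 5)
Your proposal is correct, and it takes a substantially more rigorous route than the paper does. The paper's own ``proof'' of Theorem~\ref{theorem_3} consists only of restating the definition of $D_{\alpha}^{H}$ and then asserting, verbally, that HPD is sensitive to asymmetry, that $\alpha$ tunes the degree of asymmetry, and that this compares favorably with KL and Jensen--Shannon divergence; no inequality is derived and no counterexample is exhibited. You instead supply the three missing mathematical ingredients: (i) non-negativity via H\"older's inequality for conjugate exponents $\alpha>1$, $\beta=\alpha/(\alpha-1)$, with the correct equality condition $p^{\alpha}\propto q^{\beta}$ a.e.\ (note this applies to the $\alpha>1$ regime of the theorem statement and of the experiments with $\alpha=1.1$, not to the $0<\alpha<1$, $\beta<0$ regime written in Definition~\ref{def_1} --- an inconsistency in the paper, not in your argument); (ii) genuine directionality, shown by the explicit swap $\|p\|_{\alpha}\|q\|_{\beta}\neq\|q\|_{\alpha}\|p\|_{\beta}$ for $\alpha\neq 2$ together with a Bernoulli instance; and (iii) a precise sense in which any symmetric functional must collapse the ordered pair $\bigl(D_{\alpha}^{H}(p\|q),D_{\alpha}^{H}(q\|p)\bigr)$ into one number, which is the only way to make the phrase ``captures asymmetries better'' into a provable claim. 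What your approach buys is an actual proof of the quantitative parts of a semi-qualitative theorem; what the paper's approach buys is brevity at the cost of proving essentially nothing. The one caveat is shared by both: the final claim of practical ``robustness'' in segmentation is empirical rather than deductive, and your closing remark correctly identifies that the residual work lies in the formulation, not in any missing step.
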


\begin{proof}
By definition \cite{p37}, the Hölder Statistical Pseudo-Divergence (HPD) is given by
 \begin{equation}
     D_{\alpha}^{H}(p \| q)
= -\log\!\Bigl(\frac{\int p(x)\,q(x)\,dx}{ \|p\|_{\alpha}\,\|q\|_{\beta}}\Bigr),
 \end{equation}
where \(\| p \|_{\alpha} = \bigl(\int p(x)^\alpha\,dx\bigr)^{\!1/\alpha}\) and \(\beta = \alpha/(\alpha - 1)\).
HPD is sensitive to asymmetry between \(p\) and \(q\), making it well suited for cases where the ground truth and predicted distributions differ in a non-symmetric fashion (e.g., class imbalance or spatial variations).
Unlike KL divergence (which is asymmetric but less adaptable) or Jensen--Shannon divergence (which is symmetric), HPD allows tuning the degree of asymmetry through \(\alpha\).
This flexibility in handling distributional discrepancies makes it a robust choice for segmentation tasks involving non-symmetric distributions.measuring discrepancies in non-symmetric distributions encountered in segmentation tasks.
\end{proof}

\begin{theorem}[\textbf{Efficiency of Dynamic Sharing Framework in Reducing Inference Costs}]
Consider a model employing a dynamic sharing framework where a fusion operator $M(H)$ combines available modality representations $H \subseteq \{h_1, h_2, \ldots, h_n\}$. Assuming that the fusion operator $M$ compensates for missing modalities, the inference computational cost is reduced in proportion to the number of missing modalities, while segmentation accuracy remains within an acceptable margin.
\end{theorem}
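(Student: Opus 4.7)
The plan is to decompose the inference pipeline into clearly separable, per-modality computational stages and then argue a linear reduction in cost as modalities drop out, coupled with a perturbation bound on segmentation accuracy inherited from the mutual-information argument of Theorem~2. First I would isolate three cost components per forward pass: (i) the modality-specific encoder $f_i$ with cost $C_f$, (ii) the shared backbone $T(\cdot;\theta)$ with cost $C_T$, and (iii) the fusion operator $M$ whose cost $C_M(|H|)$ depends only on the cardinality of the available set $H$. Because each $h_i = T(f_i(x_i);\theta)$ is produced by an independent parallel branch, the total inference cost when all $N$ modalities are present is $C_{\text{full}} = N(C_f + C_T) + C_M(N)$, whereas with $k$ missing modalities only $N-k$ branches are instantiated, giving $C_{\text{miss}} = (N-k)(C_f + C_T) + C_M(N-k)$.

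Next I would quantify the savings: $C_{\text{full}} - C_{\text{miss}} = k(C_f + C_T) + \bigl[C_M(N) - C_M(N-k)\bigr]$. Under the mild assumption that $M$ is a per-element aggregator (e.g.\ a mean or a max, as used in the Dynamic Combination module), $C_M$ is linear in $|H|$, so the bracketed term is also linear in $k$. Therefore the reduction $C_{\text{full}} - C_{\text{miss}}$ is exactly proportional to $k$, which establishes the cost claim.

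For the accuracy part, I would invoke Theorem~2 to show that the missing-modality feature $d_m$ (obtained by averaging available $d_i$'s in the DC module) approximates $d_f$ in the sense that $H(Y \mid d_m)$ is close to $H(Y \mid d_f)$ whenever $MI(d_f;d_m)$ is large after training. Combined with Theorem~1, which guarantees that each surviving branch still contributes modality-specific information, I can express the accuracy gap as a function of the residual conditional entropy $H(Y\mid d_m) - H(Y\mid d_f)$, and bound it by a Fano-type inequality. Provided $MI(d_f;d_m)$ stays above a threshold $\tau$ enforced by the training objective $\mathcal{L}_{\mathcal{MI}}$, the resulting segmentation error remains within a margin $\varepsilon(\tau)$ that is independent of $k$ up to first order.

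The main obstacle is the accuracy half rather than the cost half: making the phrase \emph{``within an acceptable margin''} precise requires tying a probabilistic entropy bound to the deterministic Dice loss used at inference, and justifying that the empirically trained fusion operator $M$ genuinely realizes the mutual-information lower bound assumed in Theorem~2. I would handle this by assuming $M$ is $L$-Lipschitz with respect to dropping a modality and combining Lipschitz stability with the Fano bound, so that the degradation in Dice score scales like $\mathcal{O}\bigl(L\sqrt{H(Y\mid d_m) - H(Y\mid d_f)}\bigr)$; this is the step most susceptible to hidden constants and would need careful justification in the full write-up.
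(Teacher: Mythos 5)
Your proposal is correct in substance but takes a noticeably more refined route than the paper on both halves of the claim. On the cost side, the paper simply posits a single per-modality cost $C_{\text{modality}}$ and writes $C_{\text{total}} = k \cdot C_{\text{modality}}$ for $k$ available modalities, ignoring the fusion operator entirely; your decomposition into $C_f$, $C_T$, and $C_M(|H|)$ is strictly more careful, and your observation that the bracketed term $C_M(N) - C_M(N-k)$ is only linear under the assumption that $M$ is a per-element aggregator is a genuine improvement --- it identifies a hypothesis the paper silently uses (the DC module is indeed a mean). On the accuracy side, the two arguments diverge sharply: the paper does not attempt any quantitative bound at all, instead asserting that the fusion operator ``is designed to'' preserve accuracy, that knowledge distillation and mutual-information training ``allow the model to adapt,'' and that ``empirical evidence supports'' the margin $\epsilon$. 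Your plan --- chaining Theorem~2's reduction of $H(Y \mid d_m)$ through a Fano-type inequality and an $L$-Lipschitz stability assumption on $M$ to bound the Dice degradation --- is far more ambitious, and the obstacle you flag (converting a conditional-entropy bound into a bound on a deterministic Dice score, with controllable constants) is exactly the step the paper sidesteps rather than resolves. In short: your cost argument buys rigor the paper lacks at the price of one explicit linearity assumption on $M$; your accuracy argument, if completed, would turn an empirical appeal into a theorem, but as written it remains a program with the same unproven core as the paper's version, so neither establishes the margin claim beyond assumption.
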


\begin{proof}
Let $n$ denote the total number of modalities, $k=|H|$ the number of available modalities, $C_{\text{total}}$ the total computational cost, and $C_{\text{modality}}$ the computational cost per modality.

The computational cost scales linearly with the number of processed modalities, so $C_{\text{total}} = k \cdot C_{\text{modality}}$. This implies that reducing the number of modalities directly reduces the inference cost proportionally \cite{pp4}. The fusion operator $M$ is designed to integrate any subset of modality representations, preserving segmentation accuracy within an acceptable margin. Training strategies such as knowledge distillation \cite{p53} and mutual information maximization allow the model to adapt effectively to missing modalities.

Our goal is to show that reducing $k$ results in a proportional decrease in inference cost and that segmentation accuracy remains within a margin $\epsilon$. Since each modality contributes $C_{\text{modality}}$ to the computational cost, missing modalities reduce $C_{\text{total}}$ to $k \cdot C_{\text{modality}}$.

The model's architecture allows it to maintain performance despite missing inputs, as it learns to compensate during training. Empirical evidence supports that dynamic architectures can maintain segmentation accuracy with fewer modalities \cite{pp5}. The margin $\epsilon$ represents a minimal accuracy loss, acceptable in practical applications. Thus, the dynamic sharing framework effectively reduces inference costs in proportion to missing modalities while maintaining segmentation accuracy within an acceptable range, as verified through both theoretical analysis and empirical studies.
\end{proof}

\begin{theorem}[\textbf{Convergence of the Total Loss Function in Optimization}]
Let the total loss function be defined as 
\begin{equation}
\mathcal{L}_{\text{All}}(\theta) = \mathcal{L}_{\text{Dice}}(\theta) + \lambda_1 \mathcal{L}_{\text{MI}}(\theta) + \lambda_2 \mathcal{L}_{\text{HD}}(\theta), 
\end{equation}
where $\lambda_1, \lambda_2 \geq 0$ are weighting coefficients, and $\theta$ represents the model parameters. Under standard assumptions of smoothness and boundedness, gradient-based optimization algorithms will converge to a critical point of $\mathcal{L}_{\text{all}}$, leading to improved segmentation accuracy.
\end{theorem}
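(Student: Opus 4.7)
The plan is to establish convergence through a standard descent-lemma argument for smooth nonconvex optimization. First, I would verify that each component of $\mathcal{L}_{\text{All}}$ satisfies the hypotheses needed to make gradient descent well behaved: continuous differentiability in $\theta$, lower boundedness, and Lipschitz continuity of the gradient. For the Dice term this follows once the network outputs are passed through a softmax that keeps predictions bounded away from $0$ and $1$ (so the rational expression has bounded derivatives on the relevant compact set). For $\mathcal{L}_{\mathcal{MI}}$ written as a Gaussian negative log-likelihood, the mean $\mu^{c,h,w}(d_m)$ depends smoothly on the parameters through the network, and the log-variance terms are bounded provided $\sigma_c$ is parameterized via a positive reparameterization; the quadratic term is smooth by composition. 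For the Hölder divergence term $\mathcal{L}_{\text{HD}}$, I would invoke Theorem~\ref{theorem_3} together with the softmax-based strict positivity of $\sigma(\mathbf{S}^p_{dhw})$ and $\sigma(\mathbf{S}^l_{dhw})$ to guarantee that the integrals in the denominator stay uniformly bounded away from zero, keeping the $\log$ well defined and its gradient Lipschitz. Linear combination then gives a global Lipschitz constant $L \leq L_{\text{Dice}} + \lambda_1 L_{\mathcal{MI}} + \lambda_2 L_{\text{HD}}$ for $\nabla \mathcal{L}_{\text{All}}$.

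Second, with $L$-smoothness in hand I would apply the classical descent lemma: for any step size $\eta_t \in (0, 1/L]$, gradient descent iterates satisfy
\begin{equation}
\mathcal{L}_{\text{All}}(\theta_{t+1}) \leq \mathcal{L}_{\text{All}}(\theta_t) - \frac{\eta_t}{2}\,\bigl\|\nabla \mathcal{L}_{\text{All}}(\theta_t)\bigr\|^2 .
\end{equation}
Summing from $t=0$ to $T-1$ and using the lower bound $\mathcal{L}_{\text{All}} \geq \mathcal{L}_{\min}$ yields
\begin{equation}
\sum_{t=0}^{T-1} \eta_t \,\bigl\|\nabla \mathcal{L}_{\text{All}}(\theta_t)\bigr\|^2 \leq 2\bigl(\mathcal{L}_{\text{All}}(\theta_0) - \mathcal{L}_{\min}\bigr).
\end{equation}
With a constant step size $\eta_t = \eta$, this gives $\min_{t<T}\|\nabla \mathcal{L}_{\text{All}}(\theta_t)\|^2 = \mathcal{O}(1/T)$, hence $\|\nabla \mathcal{L}_{\text{All}}(\theta_t)\| \to 0$ along a subsequence, which is the desired convergence to a critical point. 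To extend to stochastic mini-batch updates, I would invoke Robbins--Monro style step-size conditions $\sum_t \eta_t = \infty$, $\sum_t \eta_t^2 < \infty$ together with a bounded-variance assumption on the stochastic gradient estimates of each of the three losses.

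Third, to connect the critical point to improved segmentation accuracy, I would argue that at any stationary point $\theta^*$ the gradients of the three components balance, so that Dice minimization is carried out subject to the regularizing effect of the mutual-information and Hölder-divergence penalties; combined with Theorems~2 and~3 this forces $d_m$ to be informative about $d_f$ and the predicted distribution to match the label distribution tightly, yielding smaller conditional entropy $H(Y \mid d_m)$ and hence higher segmentation quality.

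The main obstacle I expect is verifying global Lipschitz smoothness of the Hölder-divergence term, because the $-\log$ in its definition is only locally Lipschitz and can become singular when the denominator $\|p\|_\alpha \|q\|_\beta$ collapses. I would handle this by restricting the analysis to the compact sublevel set $\{\theta : \mathcal{L}_{\text{All}}(\theta) \leq \mathcal{L}_{\text{All}}(\theta_0)\}$, on which the descent lemma keeps the iterates and therefore the softmax outputs bounded away from the boundary of the simplex, so that $L$ is finite on the trajectory. A secondary subtlety is ensuring the weightings $\lambda_1,\lambda_2$ are small enough that the combined loss inherits boundedness from the Dice term; this can be absorbed into the hypothesis or enforced by a projection onto a bounded parameter set without affecting the conclusion.
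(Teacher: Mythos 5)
Your proposal follows essentially the same route as the paper's proof: assume each loss component is smooth, lower-bounded, and has Lipschitz gradients, apply a descent inequality, invoke the Robbins--Monro step-size conditions $\sum_t \eta_t = \infty$, $\sum_t \eta_t^2 < \infty$ for convergence to a critical point, and then argue informally that minimizing each term improves segmentation quality. Your version is in fact more careful than the paper's --- you justify the smoothness hypotheses term by term (notably the sublevel-set argument keeping the Hölder denominator away from zero) and state the descent lemma in its correct form with the $1/L$ step-size restriction, whereas the paper simply asserts these properties with citations.
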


\begin{proof}
The loss components $\mathcal{L}_{\text{Dice}}(\theta)$, $\mathcal{L}_{\text{MI}}(\theta)$, and $\mathcal{L}_{\text{HD}}(\theta)$ are continuously differentiable with Lipschitz continuous gradients \cite{pp6}. Each loss function is bounded below, preventing divergence during optimization. A suitable gradient-based optimization method, such as stochastic gradient descent with diminishing learning rates, is applied \cite{pp7}.

The optimization algorithm guarantees that the total loss decreases with each iteration: 
\begin{equation}
\mathcal{L}_{\text{all}}(\theta_{t+1}) \leq \mathcal{L}_{\text{all}}(\theta_t) - \eta_t \| \nabla_\theta \mathcal{L}_{\text{all}}(\theta_t) \|^2,
\end{equation}
where $\eta_t$ is the learning rate at iteration $t$. Learning Rate Schedule: By setting $\eta_t$ such that $\sum_{t} \eta_t = \infty$ and $\sum_{t} \eta_t^2 < \infty$, convergence to a critical point is ensured \cite{pp7}.

Minimization of $\mathcal{L}_{\text{Dice}}$ directly improves the overlap between predictions and ground truth, boosting segmentation accuracy. Minimization of $\mathcal{L}_{\text{MI}}$ enhances the alignment between features from full and missing modalities, facilitating effective knowledge transfer \cite{pp8}. Minimization of $\mathcal{L}_{\text{HD}}$ reduces divergence between predicted and true distributions, refining model predictions. 

The overall reduction in $\mathcal{L}_{\text{all}}$ leads to improved segmentation performance due to the complementary effects of each loss component. Thus, under the given conditions, the optimization of $\mathcal{L}_{\text{all}}$ converges, resulting in enhanced segmentation accuracy as the model parameters $\theta$ approach a critical point that minimizes the total loss.
\end{proof}

\begin{table*}
\centering
\caption{Quantitative Evaluation of Segmentation Results (DSC $\uparrow$) on BraTS 2018. This table presents the quantitative results of segmentation performance, measured by the Dice Similarity Coefficient (DSC), on the BraTS 2018 dataset. The results provide a comparative evaluation of the effectiveness of different segmentation methods, where higher DSC values indicate better segmentation accuracy.}
\resizebox{\textwidth}{!}
{
\setlength{\tabcolsep}{3pt}
\begin{tabular}{c|c|ccccccccccccccc|c}
\toprule[1.0pt] 
Task & Methods & Fl & T2 & T1c & T1 & T2,Fl & T1c,Fl & T1c,T2 & T1,Fl & T1,T2 & T1,T1c &$\sim\mathrm{T}1$ &$\sim\mathrm{T}1c$ &$\sim\mathrm{T}2$ &$\sim\mathrm{Fl.}$  &Full & Avg. \\
\midrule[0.5pt]
\multirow{16}{*}{WT} 
&U-HVED \cite{pp20}	&82.1	&80.9	&62.4	&52.4	&87.5	&85.5	&82.7	&84.3	&82.2	&66.8	&88.6	&88.0 	&86.2	&83.3	&88.8	&80.1  \\
&ACN \cite{p10}	&87.3	&85.6	&80.5	&79.3	&87.8	&88.3	&86.4	&87.5	&85.5	&80.1	&88.3	&88.4	&89.0	&86.9	&89.2	&86.0  \\
&RFNet \cite{p44}	&85.8	&85.1	&73.6	&74.8	&89.3	&89.4	&85.6	&89.0	&85.4	&77.5	&90.4	&90.0	&89.9	&86.1	&90.6	&85.5 \\
&SMU-Net \cite{p5}	&87.5	&85.7	&80.3	&78.6	&87.9	&88.4	&86.1	&87.3	&85.6	&80.3	&88.2	&88.3	&88.2	&86.5	&88.9	&85.9  \\
&$\text{D}^2\text{-Net}$ \cite{pp24}	&84.2	&76.3	&42.8	&15.5	&87.9	&87.5	&84.1	&87.3	&80.1	&62.1	&88.8	&88.4	&87.7	&80.9	&88.8	&76.2  \\
&mmFormer \cite{p45}	&86.1	&81.2	&72.2	&67.5	&87.6	&87.3	&83.0	&87.1	&82.2	&74.4	&88.1	&87.8	&87.3	&82.7	&89.6	&82.9 \\
&$\text{M}^3\text{AE}$ \cite{p46}	&88.7	&84.8	&75.8	&74.4	&89.9	&89.7	&86.3	&89.0 	&86.7	&77.2	&90.2	&89.9	&88.9	&85.7	&90.1	&85.8 \\
&EMRM \cite{pp9}	&87.5	&86.5	&79.2	&78.7	&\textbf{90.8}	&90.1	&86.8	&89.9	&86.7	&79.6	&90.6	&90.7	&90.4	&88.1	&\textbf{91.3}	&87.1 \\
&MML-MM-SSF \cite{p9}	&88.8	&84.9	&75.0 	&74.8	&90.1	&90.1	&86.4	&89.9	&86.1	&78.9	&90.8	&90.4	&90.4	&86.5	&90.9	&86.3  \\
&QuMo \cite{pp12}	&89.3	&86.1	&77.6	&77.5	&90.4	&89.3	&86.8	&90.1	&86.6	&79.4	&89.7	&90.5	&89.2	&86.9	&89.7	&86.6  \\
&GSS \cite{pp13}	&87.7	&86.4	&78.5	&78.8	&89.9	&89.9	&87.9	&89.6	&87.5	&81.9	&90.7	&90.2	&90.3	&88.0 	&90.7	&87.2  \\
&MMCFormer \cite{p7}	&86.2	&84.1	&80.4	&78.6	&88.0	&88.7	&86.3	&87.7	&85.9	&80.2	&88.9	&88.5	&88.3	&86.8	&89.0	&85.8  \\
&$\text{M}^2\text{FTrans}$ \cite{pp15}	&87.2	&86.9	&77.8	&77.2	&89.2	&88.9	&88.1	&88.4	&87.5	&81.1	&89.8	&89.4	&89.0	&88.0	&89.7	&86.5  \\
&MTI \cite{p47}	&88.4	&86.6	&77.8	&78.7	&90.3	&89.5	&88.2	&89.7	&88.1	&81.8	&90.6	&89.7	&90.4	&88.4	&90.6	&87.3 \\
&GGDM \cite{p48}	&89.3	&87.0	&79.9	&75.9	&90.7	&90.6 	&88.6	&90.2	&88.2 	&81.1	&\textbf{91.3}	&\textbf{91.0} 	&90.5	&87.9	&91.0 	&87.6  \\
&OUR	&\textbf{89.8}	&\textbf{88.2}	&\textbf{80.5}	&\textbf{78.8}	&\textbf{90.8}	&\textbf{90.7}	&\textbf{89.3}	&\textbf{90.4}	&\textbf{88.7}	&\textbf{82.0}	&\textbf{91.3}	&\textbf{91.0}	&\textbf{90.9}	&\textbf{89.2}	&\textbf{91.3}	&\textbf{88.2}  \\
\midrule[0.5pt]

\multirow{16}{*}{TC} 
&U-HVED \cite{pp20}	&50.4	&54.1	&66.7	&37.2	&59.7	&72.9	&73.7	&55.3	&57.2	&69.7	&75.6	&61.5	&74.2	&75.3	&76.4	&64.0   \\
&ACN \cite{p10}	&67.7	&67.9	&84.2	&71.2	&71.6	&83.4	&84.4	&71.3	&73.3	&84.6	&82.9	&67.9	&84.3	&84.7	&85.2	&77.6  \\
&RFNet \cite{p44}	&62.6	&66.9	&80.3	&65.2	&71.8	&81.6	&82.4	&72.2	&71.1	&81.3	&82.6	&74.0	&82.3	&82.9	&82.9	&76.0 \\
&SMU-Net \cite{p5}	&71.8	&67.2	&84.1	&69.5	&71.2	&84.1	&85.0	&71.2	&73.5	&84.4	&82.5	&67.9	&84.2	&84.4	&87.3	&77.9  \\
&$\text{D}^2\text{-Net}$ \cite{pp24}	&47.3	&56.7	&65.1	&16.8	&62.6	&80.8	&80.3	&61.6	&63.2	&78.2	&80.7	&63.7	&80.9	&79.0	&80.1	&66.5  \\
&mmFormer \cite{p45}	&61.2	&64.2	&75.4	&56.6	&69.8	&77.9	&78.6	&65.9	&69.4	&78.6	&79.6	&71.5	&79.8	&80.4	&85.8	&73.0 \\
&$\text{M}^3\text{AE}$ \cite{p46}	&66.1	&69.4	&82.9	&66.4	&70.9	&84.4	&84.2	&70.8	&71.8	&83.4	&84.6	&72.7	&84.1	&84.4	&84.5	&77.4 \\
&EMRM \cite{pp9}	&68.8	&68.1	&84.6	&71.2	&73.7	&85.0 	&85.7	&73.5	&72.9	&84.4	&84.8	&75.6	&85.4	&86.6	&86.8	&79.1  \\
&MML-MM-SSF \cite{p9}	&69.6	&69.1	&81.4	&64.5	&72.9	&84.8	&84.1	&72.8	&71.4	&82.6	&85.7	&74.0	&85.4	&84.3	&85.8	&77.9  \\
&QuMo \cite{pp12}	&68.3	&70.5	&81.1	&65.8	&74.2	&83.3	&83.6	&74.2	&73.2	&82.0	&83.4	&74.8	&83.6	&83.3	&83.6	&77.7  \\
&GSS \cite{pp13}	&68.6	&69.4	&82.3	&67.5	&73.4	&83.7	&84.4	&73.8	&73.2	&83.7	&84.4	&75.4	&84.7	&84.6	&84.6	&78.2  \\
&$\text{M}^2\text{FTrans}$ \cite{pp15}	&69.9	&72.4	&82.6	&66.2	&75.4	&84.8	&85.2	&74.1	&74.5	&83.5	&85.3	&76.5	&85.3	&85.5	&85.7	&79.1  \\
&MMCFormer \cite{p7}	&70.0	&69.7	&86.6	&69.3	&72.4	&86.7	&86.7	&72.0	&74.0	&86.6	&86.6	&67.8	&86.6	&86.7	&87.4	&79.3  \\
&MTI \cite{p47}	&66.7	&68.8	&81.5	&65.6	&71.8	&84.8	&84.8	&72.0	&72.3	&83.5	&85.8	&74.1	&85.2	&85.8	&85.9	&77.9 \\
&GGDM \cite{p48}	&\textbf{77.3}	&76.3	&85.3	&58.1	&78.5	&\textbf{87.0} 	&\textbf{87.6}	&76.3	&76.8	&85.6	&\textbf{87.1}	&78.3	&86.5	&86.2	&85.8	&80.8 \\
&OUR	&76.2	&\textbf{77.6}	&\textbf{86.5}	&\textbf{72.6}	&\textbf{79.3}	&86.6	&87.2	&\textbf{78.6}	&\textbf{79.1}	&\textbf{86.9}	&\textbf{87.1}	&\textbf{80.1}	&\textbf{87.1}	&\textbf{87.4}	&\textbf{87.3}	&\textbf{82.6} \\
\midrule[0.5pt]

\multirow{16}{*}{ET} 
&U-HVED \cite{pp20}	&24.8	&30.8	&65.5	&13.7	&34.6	&70.3	&70.2	&24.2	&30.7	&67.0 	&71.2	&34.1	&71.1	&71.1	&71.7	&50.1  \\
&ACN \cite{p10}	&42.8	&43.0 	&78.1	&41.5	&46.0 	&77.5	&75.7	&43.7	&47.4	&75.2	&76.0 	&42.1	&76.2	&76.1	&77.1	&61.2  \\
&RFNet \cite{p44}	&35.5	&43.0	&67.7	&32.3	&45.4	&72.5	&70.6	&38.5	&42.9	&68.5	&73.1	&46.0	&71.1	&70.9	&71.4	&56.6 \\
&SMU-Net \cite{p5}	&46.1	&43.1	&78.3	&42.8	&46.0	&77.3	&75.7	&44.0	&47.7	&75.1	&75.4	&43.1	&76.2	&76.2	&79.3	&61.8  \\
&$\text{D}^2\text{-Net}$ \cite{pp24}	&8.1	&16.0	&66.3	&8.1	&17.4	&64.8	&68.7	&9.5	&16.5	&70.7	&66.4
&9.4	&65.7	&68.3	&68.4	&42.9  \\
&mmFormer \cite{p45}	&39.3	&43.1	&72.6	&32.5	&47.5	&75.1	&74.5	&43.0	&45.0	&74.0	&75.7	&47.7	&75.5	&74.8	&77.6	&59.9 \\
&$\text{M}^3\text{AE}$ \cite{p46}	&35.6	&47.6	&73.7	&37.1	&45.4	&75.0	&75.3	&41.2	&48.7	&74.7	&73.8	&44.8	&74.0 	&75.4	&75.5	&59.9 \\
&EMRM \cite{pp9}	&43.9	&44.9	&78.1	&41.1	&48.6	&77.3	&77.2	&45.8	&46.3	&77.6	&77.9	&48.2	&76.0 	&76.7	&78.3	&62.5  \\
&MML-MM-SSF \cite{p9}	&45.1	&46.2	&75.8	&42.6	&47.2	&77.8	&78.6	&44.8	&49.9	&78.3	&78.2	&46.6	&78.5	&78.4	&78.5	&63.1  \\
&QuMo \cite{pp12}	&44.3	&42.2	&75.6	&40.7	&48.6	&77.3	&75.0 	&49.0	&46.4	&76.0	&77.1	&52.6	&77.5	&76.5	&77.1	&62.4  \\
&GSS \cite{pp13}	&42.9	&45.8	&77.1	&42.4	&48.6	&77.8	&79.4	&47.3	&49.0 	&77.9	&78.7	&50.2	&78.4	&78.5	&78.3	&63.5  \\
&$\text{M}^2\text{FTrans}$ \cite{pp15}	&38.0 	&46.4	&78.9	&37.2	&49.1	&82.1	&80.9	&43.5	&47.2	&80.8	&80.6	&49.8	&82.2	&80.8	&80.6	&63.9  \\
&MMCFormer \cite{p7}	&49.6	&50.7	&79.0	&42.9	&48.3	&79.5	&79.1	&48.1	&46.9	&78.7	&79.1	&50.7	&78.1	&79.2	&80.1	&64.7  \\
&MTI \cite{p47}	&40.5	&41.4	&75.7	&44.5	&48.3	&76.8	&77.7	&44.4	&47.7	&77.1	&76.6	&50.0 	&77.4	&78.5	&80.4	&62.5 \\
&GGDM \cite{p48}	&47.4	&\textbf{53.4}	&81.6	&34.7	&55.2	&82.0	&82.6	&51.1	&54.7	&82.0	&82.1	&56.0	&82.2	&82.8	&82.1	&67.6 \\
&OUR	&\textbf{48.6}	&52.9	&\textbf{82.4}	&\textbf{48.7}	&\textbf{55.8}	&\textbf{83.0}	&\textbf{83.2}	&\textbf{53.8}	&\textbf{56.9}	&\textbf{82.8}	&\textbf{83.7}	&\textbf{58.4}	&\textbf{83.2}	&\textbf{83.5}	&\textbf{84.1}	&\textbf{69.4} \\
\bottomrule[1.0pt]
\end{tabular}
}
\label{tab-2}
\end{table*}

\section{Experiments and Analysis}
\subsection{Datasets and Evaluation Metrics}	
To improve the model's logical coherence, result reliability, and algorithm robustness in brain tumor segmentation, this study utilizes the BraTS 2018 and BraTS 2020 datasets\cite{p38}. These datasets are widely recognized in the field of medical imaging for multi-classification and segmentation tasks. They are extensive collections of multi-modal MRI scans (T1, T1Gd, T2, and FLAIR) from patients with high-grade and low-grade gliomas. Expertly annotated, these datasets mark tumor subregions like the enhancing tumor, peritumoral edema, and necrotic core. They are essential for advancing and validating automated brain tumor segmentation algorithms. To evaluate the effectiveness of our method, the Dice Similarity Coefficient (DSC)\cite{p49}, 
\begin{equation}
\begin{aligned}
\mathrm{Dice}(P,G)=\frac{2\times|P\cap G|}{|P|+|G|}
\end{aligned}
\end{equation}
is used, which is a common performance metric in medical image analysis. The DSC measures the overlap between the model's output ($P$) and the ground truth ($G$). A higher Dice coefficient indicates better predictive performance.

\subsection{Training Details}	
In this study, a PyTorch-based framework \cite{p43} (version 2.3.0) is utilized for training all models on a server equipped with dual NVIDIA RTX A6000 GPUs. The standard 3D U-Net architecture \cite{p41} is adopted, featuring a single encoder-decoder parallel processing structure that incorporates residual blocks and group normalization techniques. During training, a batch size of 8 is set, and the Adam optimizer \cite{p42} is employed to update model parameters, starting with a learning rate of 0.0008 and a weight decay of 0.00001. Training is conducted for 600 epochs to ensure comprehensive learning and performance optimization. Post-training, thorough testing of the model is conducted under all possible channel dropout configurations.

\subsubsection{Compare Experimental Models}	
The comparative experimental models employed in this study are U-HVED \cite{pp20}, ACN \cite{p10}, RFNet \cite{p44}, SMU-Net \cite{p5}, $\text{D}^2\text{-Net}$ \cite{pp24}, mmFormer \cite{p45}, $\text{M}^3\text{AE}$ \cite{p46}, EMRM \cite{pp9}, MML-MM-SSF \cite{p9}, MAVP \cite{pp11}, QuMo \cite{pp12}, GSS \cite{pp13}, SFusion \cite{pp14}, MMCFormer \cite{p7}, $\text{M}^2\text{FTrans}$ \cite{pp15},  MTI \cite{p47}, and GGDM \cite{p48}. Each model makes unique contributions to the field of missing modality segmentation, as outlined below. The results for U-HVED, ACN, RFNet, SMU-Net, $\text{D}^2\text{-Net}$, mmFormer, $\text{M}^3\text{AE}$, EMRM, MML-MM-SSF, MAVP, QuMo, GSS, SFusion, MMCFormer, $\text{M}^2\text{FTrans}$ and MTI are sourced from their respective original research papers, all adhering to the same experimental configuration as RFNet. Additionally, the GGMD method is tested, according to authors' code.  These models are outlined below:  \textbf{1. U-HVED (Dorent et al., MICCAI 2019) \cite{pp20}}: This work introduced a hetero-modal variational encoder-decoder framework capable of simultaneously completing missing modalities and performing segmentation tasks on multimodal data. \textbf{2. ACN (Wang et al., MICCAI 2021) \cite{p10}}: The Adversarial Co-Training Network (ACN) was developed for brain tumor segmentation, specifically addressing scenarios with missing imaging modalities. \textbf{3. RFNet (Ding et al., ICCV 2021) \cite{p44}}: RFNet is a region-aware fusion network designed for the segmentation of brain tumors in scenarios with incomplete multi-modal data. \textbf{4. SMU-Net (Azad et al., ICMIDL 2022) \cite{p5}}: SMU-Net, a style-matching U-Net architecture, was proposed to effectively segment brain tumors even in the absence of certain imaging modalities. \textbf{5. $\text{D}^2\text{-Net}$ (Yang et al., TMI 2022) \cite{pp24}}: The Dual Disentanglement Network (D2-Net) was designed to handle brain tumor segmentation challenges when imaging modalities are incomplete. \textbf{6. mmFormer (Zhang et al., MICCAI 2022) \cite{p45}}: mmFormer is a multimodal medical transformer developed to improve brain tumor segmentation in scenarios involving incomplete multimodal data. \textbf{7. $\text{M}^3\text{AE}$ (Liu et al., AAAI 2023) \cite{p46}}: M3AE is a multimodal representation learning approach for brain tumor segmentation that effectively handles missing modalities. \textbf{8. EMRM (Konwer et al., ICCV 2023) \cite{pp9}}: EMRM utilized a meta-learning approach to enhance modality-agnostic representations, improving brain tumor segmentation across diverse imaging modalities. \textbf{9. MML-MM-SSF (Wang et al., CVPR 2023) \cite{p9}}: MML-MM-SSF is a shared-specific feature modeling method tailored for multi-modal learning, effectively managing scenarios with missing modalities. \textbf{10. MAVP (Qiu et al., ACM-MM 2023) \cite{pp11}}: MAVP is a modality-aware visual prompting technique for brain tumor segmentation, specifically designed to address incomplete multi-modal contexts in medical imaging. \textbf{11. QuMo (Chen et al., MICCAI 2023) \cite{pp12}}: This method introduced a query re-training approach for modality-agnostic brain tumor segmentation, enhancing segmentation accuracy even with missing data modalities. \textbf{12. GSS (Qiu et al., ICCV 2023) \cite{pp13}}: GSS leveraged a category-aware group self-support learning method for brain tumor segmentation in incomplete multi-modal scenarios, improving segmentation accuracy by harnessing the mutual support among available modalities. \textbf{13. SFusion (Liu et al., MICCAI 2023) \cite{pp14}}: SFusion introduced a self-attention-based multimodal fusion block that effectively integrates multiple input modalities into a single unified representation, boosting performance in complex tasks. \textbf{14. MMCFormer (Karimijafarbigloo et al., MIDL 2023) \cite{p7}}: A novel transformer-based approach, MMCFormer, was proposed to compensate for missing imaging modalities, enhancing the accuracy of brain tumor segmentation. \textbf{15. $\text{M}^2\text{FTrans}$ (Shi et al., JBHI 2024) \cite{pp15}}: MFTrans presented a Modality-Masked Fusion Transformer developed for robust brain tumor segmentation in scenarios with incomplete modality data. \textbf{16. MTI (Ting and Liu, JBHI 2024) \cite{p47}}: MTI is a multimodal transformer designed to enhance brain tumor segmentation using incomplete MRI data. \textbf{17. GGMD (Wang et al., AAAI 2024) \cite{p48}}: GGMD is a method designed to enhance robustness in brain tumor segmentation when handling missing modalities.

Tables \ref{tab-2}--\ref{tab-3} showcase the performance of our research method on the BraTS 2018 and BraTS 2020 datasets, benchmarked against five state-of-the-art brain tumor segmentation techniques, and the symbol $ \sim(\cdot) $ in Tables \ref{tab-2}--\ref{tab-3} denotes the amissing of a specific modality, with optimal performance results highlighted in black across different tumor types. The results highlight our method's superior performance across all three evaluated tumor regions—Whole Tumor (WT), Tumor Core (TC), and Enhancing Tumor (ET)—achieving the highest average Dice Similarity Coefficient (DSC). Specifically, Table \ref{tab-2} demonstrates improvements of 0.6\% in WT, 1.8\% in TC, and 1.8\% in ET regions compare to existing state-of-the-art methods on the BraTS 2018 dataset. Similarly, Table \ref{tab-3} shows enhancements of 0.8\% in WT, 0.7\% in TC, and 0.9\% in ET regions on the BraTS 2020 dataset.

\begin{figure}
	\centering
	\includegraphics[width=1.0\linewidth]{./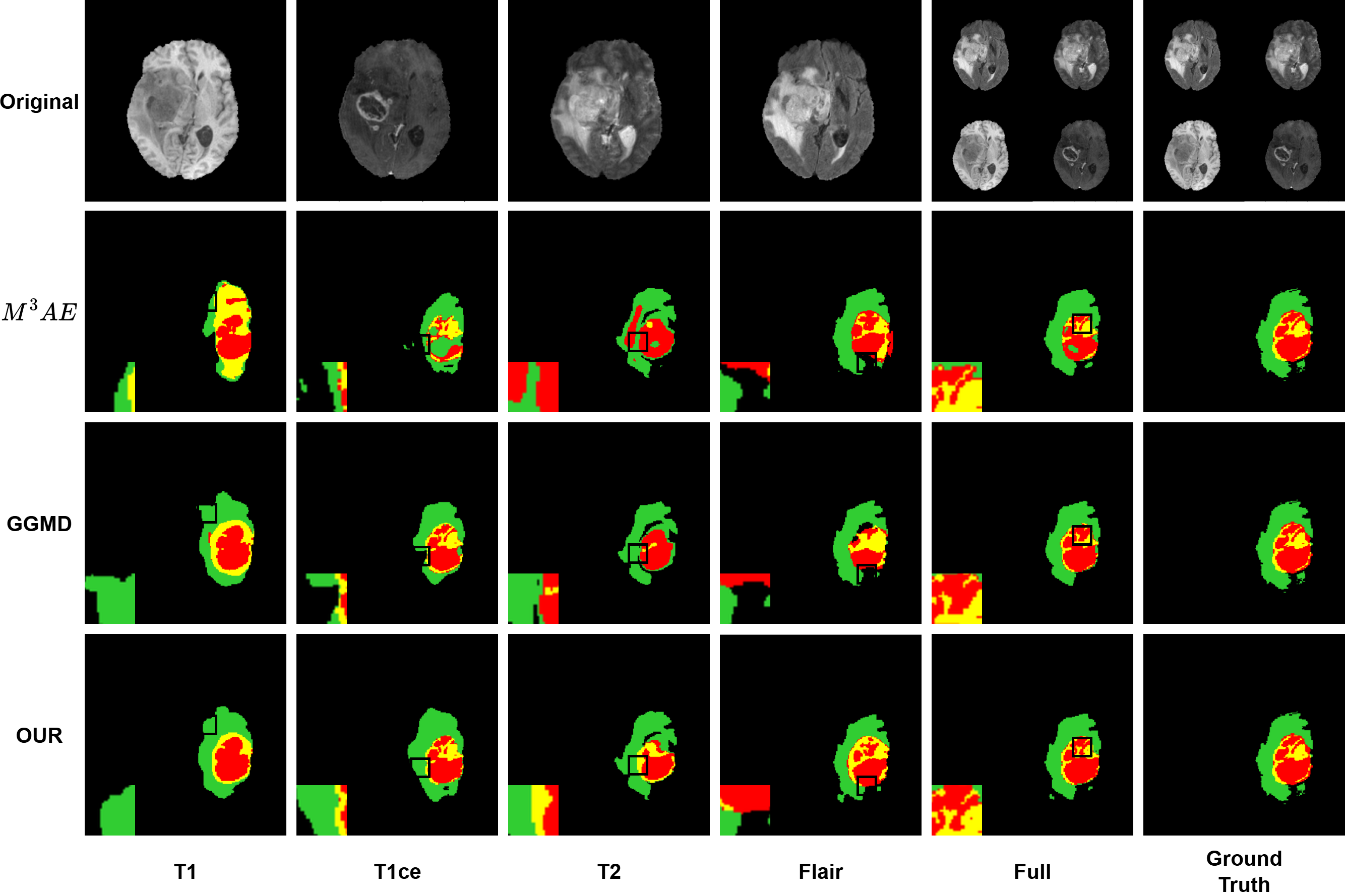}
	\caption{Segmentation results of three models on the BraTS 2018 dataset using various modality inputs. The second row presents reproduced results from the $M^3AE$ model; the third row shows reproduced results from the GGMD model; and the fourth row displays results from our proposed method. Each column represents a different input configuration: the first four columns correspond to single-modality inputs (T1, T1ce, T2, and Flair, respectively), the fifth column illustrates results using all four modalities simultaneously, and the last column provides the ground truth segmentation for reference.}
	\label{fig:_2}
\end{figure}

Our method excels notably in scenarios with missing multimodal data, achieving substantial gains of 2.0\% to 6.0\% in Dice coefficients. For instance, in Table \ref{tab-2}, when only the T1 modality is available, our method outperforms other advanced algorithms by 0.1\%, 6.2\%, and 4.2\% in WT, TC, and ET regions, respectively. Correspondingly, Table \ref{tab-3} indicates improvements of 2.4\%, 2.9\%, and 3.0\% under similar conditions.

These findings underscore the robustness of our method in maintaining efficient segmentation performance despite significant deficits in multimodal data. Furthermore, Fig. \ref{fig:_2} compares our method with other advanced techniques like M3AE and GGMD under single-modal and full-modal input conditions. These segmentation results validate our conclusions from the evaluation metrics, demonstrating that our method surpasses other advanced technologies in handling multimodal data, especially when only a single modality is available. This performance advantage is particularly notable under single-modal input conditions.
 \begin{table*}
\centering
\caption{Quantitative Evaluation of Segmentation Results (DSC $\uparrow$) on BraTS 2020. This table provides a quantitative assessment of the segmentation performance on the BraTS 2020 dataset, measured using the Dice Similarity Coefficient (DSC). An upward arrow ($\uparrow$) indicates that higher DSC values correspond to better segmentation accuracy, allowing a clear comparison of the effectiveness of different models or approaches on this dataset.}
\resizebox{\textwidth}{!}
{
\setlength{\tabcolsep}{3pt}
\begin{tabular}{c|c|ccccccccccccccc|c}
\toprule[1.0pt] 
Task & Methods & Fl & T2 & T1c & T1 & T2,Fl & T1c,Fl & T1c,T2 & T1,Fl & T1,T2 & T1,T1c &$\sim\mathrm{T}1$ &$\sim\mathrm{T}1c$ &$\sim\mathrm{T}2$ &$\sim\mathrm{Fl.}$ & Full & Avg. \\
\midrule[0.5pt]
\multirow{10}{*}{WT} 
&RFNet \cite{p44}	&87.3	&86.1	&76.8	&77.2	&89.9	&89.9	&87.7	&89.7	&87.7	&81.1	&90.7	&90.6	&90.7	&88.3	&91.1	&87.0 \\
&mmFormer \cite{p7}	&86.5	&85.5	&78.0	&76.2	&89.4	&89.3	&87.5	&88.7	&86.9	&80.7	&90.4	&89.8	&89.7	&87.6	&90.5	&86.4 \\
&$\text{M}^3\text{AE}$ \cite{p46}	&86.5	&86.1	&73.9	&76.7	&89.3	&89.5	&87.4	&89.4	&87.2	&78.1	&90.2	&90.4	&90.0	&88.6	&90.6	&86.3 \\
&MAVP \cite{pp11}	&86.9	&86.7	&79.5	&79.5	&90.1	&90.0	&88.4	&89.8	&87.9	&83.1	&90.8	&90.6	&90.6	&88.9	&91.0	&87.6  \\
&GSS \cite{pp13}	&88.1	&87.6	&80.1	&79.8	&90.4	&90.6	&88.7	&90.1	&88.7	&83.1	&91.3	&91.1	&91.1	&89.0 	&91.6	&88.1  \\
&SFusion \cite{pp14}	&84.0 	&79.6	&69.8	&69.1	&87.5	&87.3	&84.3	&86.4	&81.8	&75.3	&89.1	&87.6	&88.0	&84.6	&88.9	&82.9  \\
&$\text{M}^2\text{FTrans}$ \cite{pp15}	&88.7	&87.2	&78.8	&79.2	&90.6	&90.4	&88.7	&90.3	&88.3	&82.4	&91.2	&90.9	&91.0	&89.0 	&91.4	&87.9  \\
&MTI \cite{p47}	&89.1	&86.5	&77.4	&78.1	&90.5	&90.0	&88.4	&89.9	&88.0	&81.2	&90.6	&90.3	&90.7	&88.7	&90.6	&87.3 \\
&GGDM \cite{p48} &91.0	&88.3	&80.6	&77.4	&92.1	&91.9	&89.8	&91.6	&89.3	&82.7	&92.3	&92.1	&91.6	&89.2	&92.0 	&88.8 \\
&OUR    &\textbf{91.9}	&\textbf{89.2}	&\textbf{81.5}	&\textbf{80.5}	&\textbf{92.5}	&\textbf{92.4}	&\textbf{90.0}	&\textbf{92.1}&\textbf{89.8}	&\textbf{83.2}	&\textbf{92.8}	&\textbf{92.6}	&\textbf{92.3}&\textbf{89.9}	&\textbf{92.7}	&\textbf{89.6}	\\
\midrule[0.5pt]

\multirow{10}{*}{TC} 
&RFNet \cite{p44}	&69.2	&71.0	 &81.5	&66.0	&74.1	&84.7	&83.5	&73.1	&73.1	&83.4	&85.0	&75.2	&85.1	&83.5	&85.2	&78.2 \\
&mmFormer \cite{p7}	&64.6	&63.3	&81.5	&63.2	&70.3	&83.7	&82.6	&71.7	&67.7	&82.8	&83.9	&72.4	&84.4	&79.0	&84.6	&75.7 \\ 
&$\text{M}^3\text{AE}$ \cite{p46}	&68.0	&70.3	&81.4	&66.0	&75.0	&82.0	&83.0	&73.8	&72.5	&82.4	&83.1	&75.1	&82.4	&84.1	&84.4	&77.6 \\
&MAVP \cite{pp11}	&69.9	&71.0	&84.3	&67.7	&74.5	&86.6	&86.3	&74.4	&72.9	&85.8	&86.4	&75.8	&86.7	&86.5	&86.4	&79.7  \\
&GSS \cite{pp13}	&72.5	&72.3	&83.4	&66.4	&75.7	&86.0	&84.6	&73.7	&73.4	&83.2	&86.0	&75.7	&85.8	&84.3	&85.8	&79.3  \\
&SFusion \cite{pp14}	&52.8	&62.0	&75.6	&53.9	&66.4	&79.1	&81.5	&62.3	&66.7	&80.4	&82.1	&68.8	&82.0	&82.3	&82.2	&71.9  \\
&$\text{M}^2\text{FTrans}$ \cite{pp15}	&72.2	&72.3	&81.9	&66.8	&75.4	&85.5	&84.6	&74.4	&73.6	&83.7	&85.3	&76.1	&85.8	&84.9	&85.4	&79.2  \\
&MTI \cite{p47}	&69.3	&71.5	&83.4	&66.8	&75.5	&85.6	&86.4	&73.9	&73.3	&85.2	&86.4	&75.9	&86.5	&86.5	&87.4	&79.6 \\
&GGDM \cite{p48}	&77.0	&79.1	&\textbf{87.6}	&70.5	&81.6	&88.0	&88.5	&80.3	&\textbf{81.1}	&88.0	&88.1	&82.4	&87.9	&88.4	&88.0 	&83.8 \\
&OUR	&\textbf{78.6}	&\textbf{79.8}	&87.5	&\textbf{73.4}	&\textbf{82.5}	&\textbf{88.9}	&\textbf{88.6}	&\textbf{81.2}	&80.8	&\textbf{88.1}	&\textbf{88.8}	&\textbf{82.5}	&\textbf{88.9}	&\textbf{88.6}	&\textbf{88.8}	&\textbf{84.5}  \\
\midrule[0.5pt]

\multirow{10}{*}{ET} 
&RFNet \cite{p44}	&38.2	&46.3	&74.9	&37.3	&49.3	&76.7	&75.9	&41.0	&45.7	&78.0	&77.1	&49.9	&76.8	&77.0	&78.0	&61.5 \\
&mmFormer \cite{p7}	&36.6	&49.0	&78.3	&37.6	&49.0	&79.4	&77.2	&42.9	&49.1	&81.7	&78.7	&50.0	&80.6	&68.3	&79.9	&62.6 \\
&$\text{M}^3\text{AE}$ \cite{p46}	&40.5	&46.0	&72.4	&39.9	&47.3	&74.7	&76.8	&43.2	&46.6	&75.4	&77.1	&48.2	&75.9	&77.4	&78.0	&61.3 \\
&MAVP \cite{pp11}	&42.8	&47.2	&81.4	&39.1	&49.5	&81.2	&80.1	&46.8	&47.3	&81.7	&80.0	&51.1	&81.8	&82.1	&81.0   &64.9  \\
&GSS \cite{pp13}	&42.3	&51.3	&78.6	&39.7	&54.4	&81.0	&80.5	&49.5	&53.1	&80.8	&81.1	&53.9	&82.3	&81.2	&83.0	&66.2  \\
&SFusion \cite{pp14}	&34.4	&35.9	&71.9	&29.7	&41.5	&75.8	&74.7	&38.2	&40.4	&74.9	&74.9	&43.5	&75.4	&74.8	&73.8	&57.3  \\
&$\text{M}^2\text{FTrans}$ \cite{pp15}	&43.4	&51.5	&82.6	&40.9	&53.9	&83.1	&82.4	&47.0	&49.9	&83.8	&81.2	&53.3	&84.1	&82.4	&82.2	&66.8  \\
&MTI \cite{p47}	&43.6	&45.6	&78.9	&41.3	&48.7	&81.8	&81.7	&48.2	&50.0	&79.2	&81.0	&52.5	&81.8	&78.5	&81.6	&65.0 \\
&GGDM \cite{p48}	&49.8	&52.5	&84.2	&39.7	&56.5	&84.6	&84.5	&54.6	&\textbf{55.3}	&84.2	&84.2	&\textbf{58.6}	&84.3	&\textbf{84.3}	&84.1	&69.4  \\
&OUR	&\textbf{51.9}	&\textbf{54.6}	&\textbf{84.6}	&\textbf{44.3}	&\textbf{57.7}	&\textbf{84.9}	&\textbf{84.8}	&\textbf{55.4}	&55.1	&\textbf{84.3}	&\textbf{84.9}	&\textbf{58.6}	&\textbf{84.4}	&\textbf{84.3}	&\textbf{84.3}	&\textbf{70.3}  \\
\bottomrule[1.0pt]
\end{tabular}
}
\label{tab-3}
\end{table*}

\begin{table}
\centering
\caption{Performance Comparison of Hölder Divergence, $f$-Divergences, and Standard Loss Functions (MSE, BCE) on BraTS 2018 Dataset. This table provides a systematic quantitative evaluation of these metrics within the dataset context.}
\setlength{\tabcolsep}{10pt}
\begin{tabular}{c|ccc|c}
\toprule[1.0pt]
\multicolumn{1}{c}{\textbf{Methods}} & \multicolumn{3}{c}{\textbf{Dice}} & \multicolumn{1}{c}{\textbf{}} \\
\midrule[0.5pt]
\textbf{$f$-divergence }     &\textbf{WT} & \textbf{TC} & \textbf{ET}  &\textbf{Avg.} \\
\midrule[0.5pt]
\textbf{Total Variation} \cite{pp43}         &67.2  &1.9   &0.9   &23.3   \\
\textbf{Squared Hellinger } \cite{pp40}       &85.3  &75.4  &60.1  &73.6  \\
\textbf{Kullback-Leibler} \cite{pp41}               &84.5  &76.2  &61.4  &74.0  \\
\textbf{Neyman $\chi^2$} \cite{p50}        &83.4  &75.1  &59.9  &72.8  \\
\textbf{Jensen-Shannon } \cite{pp42}         &84.6  &76.5  &59.8  &73.6  \\
\textbf{MSE Loss} \cite{pp38}         &85.3  &80.7  &65.2  &77.1  \\
\textbf{BCE Loss} \cite{pp39}         &85.5  &80.9  &64.7  &77.0  \\
\textbf{Hölder} \cite{p37}                 &$\textbf{88.2}$  &$\textbf{82.6}$  &$\textbf{69.4}$  &$\textbf{80.1}$  \\
\bottomrule[1.0pt]
\end{tabular}
\label{tab-4}
\end{table}

\subsubsection{Exploration of the Superiority of Hölder Divergence}	
To explore the superiority of Hölder divergence, this study conduct experimental comparisons using Hölder divergence and other $f$-divergences, including Total Variation \cite{pp43}, Squared Hellinger \cite{pp40}, Kullback-Leibler \cite{pp41}, Neyman $\chi^2$ \cite{p50}, Jensen-Shannon divergence \cite{pp42}, mean squared error (MSE) \cite{pp38}, and binary cross-entropy (BCE) \cite{pp39} loss functions. As shown in the table \ref{tab-4}, the average Dice coefficient of Hölder divergence reached 80.1\% after adjusting the hyperparameter $\alpha$, which is 3.0\% higher than the best-performing alternative methods. This significant performance advantage underscores the importance of Hölder divergence in improving model accuracy.

The experimental data consistently demonstrate that the application of Hölder divergence significantly enhances segmentation task performance, validating its effectiveness in the field of medical image processing. Additionally, our research reveals the critical role of Hölder divergence in enhancing knowledge distillation techniques to improve segmentation efficiency, providing valuable references and guidance for future research and development in related technologies. These findings not only deepen our understanding of the potential of Hölder divergence but also provide empirical evidence for optimizing deep learning models using this method.

\begin{table}
\centering
\caption{Quantitative evaluation results of the ablation study on the BraTS 2018 dataset. This table illustrates the impact of the parallel network framework (PNF) and individual model components on performance, highlighting their contributions to enhancing overall accuracy and robustness.}
\setlength{\tabcolsep}{5pt}
\begin{tabular}{cccc|cccc|c}
\toprule[1.0pt]
\multicolumn{4}{c}{\textbf{Methods}} & \multicolumn{4}{c}{\textbf{Number of Missing Modalities}} & \multicolumn{1}{c}{\textbf{}} \\
\midrule[0.5pt]
PNF  &$\mathcal{L}_{dice}$  &$\mathcal{L}_{MI}$  &$\mathcal{L}_{HD}$  &\textbf{3} & \textbf{2} & \textbf{1} & \textbf{0} & \textbf{Avg.} \\
\midrule[0.5pt]
$\times$   &$\checkmark$   &$\times$  &$\times$   &46.8  &59.9  &69.3  &74.0  &59.8  \\
$\checkmark$   &$\checkmark$   &$\times$  &$\times$  &60.2  &71.9  &77.1  &80.5  &70.7  \\
$\checkmark$   &$\checkmark$   &$\checkmark$   &$\times$ &72.4  &79.7  &83.9  &87.0  &79.4  \\
$\checkmark$   &$\checkmark$   &$\times$ &$\checkmark$  &72.9  &80.1  &84.3  &87.4  &79.8 \\
$\checkmark$   &$\checkmark$   &$\checkmark$   &$\checkmark$   & \textbf{73.6}  & \textbf{80.3} & \textbf{84.4}  & \textbf{87.6} & \textbf{80.1} \\
\bottomrule[1.0pt]
\end{tabular}
\label{tab-6}
\end{table}

\begin{table}[htbp]
\centering
\caption{Quantitative comparison of training times on the BraTS 2018 dataset. The table compares the training runtime of our proposed method against several state-of-the-art methods, highlighting the efficiency advantage of our approach.}
\label{tab:training_time}
\setlength{\tabcolsep}{32pt}
\begin{tabular}{lc}
\toprule
\textbf{Method} & \textbf{Training (h)} \\
\midrule
ACN~\cite{p10} & 226.3 \\
SMU-Net~\cite{p5} & 154.4 \\
ProtoKD~\cite{p6} & 237.8 \\
MML-MM-SSF~\cite{p9} & 425.1 \\
MMCFormer~\cite{p7} & 130.5 \\
M\textsuperscript{2}FTrans~\cite{pp15} & 100.7 \\
GGDM~\cite{p48} & 30.1 \\
\textbf{OUR} & \textbf{22.5} \\
\bottomrule
\end{tabular}
\label{tab-7}
\end{table}
\begin{table}[htbp]
\centering
\caption{Quantitative Evaluation of Parallel Computing Efficiency on the BraTS 2018 Dataset. This table provides a quantitative assessment of parallel computing efficiency on the BraTS 2018 dataset, reporting metrics such as speedup ratios and parallel efficiency across different numbers of computational cores or threads. The results demonstrate the effectiveness of the proposed method in leveraging parallelism to accelerate computations, and highlight its scalability and performance improvements achieved through parallel implementation.}
\setlength{\tabcolsep}{10pt}
\begin{tabular}{c|cccc}  
\toprule[1.0pt]
\multirow{2}{*}{\textbf{Evaluation Criteria}} & \multicolumn{4}{c}{\textbf{Number of GPUs}} \\  
& \textbf{1} & \textbf{2} & \textbf{3} & \textbf{4} \\
\midrule[0.5pt]
\textbf{Training (h)}     &30.1  &22.5  &\textbf{19.7}  &25.5   \\
\textbf{Speedup}          &1.0     &1.34   &\textbf{1.53}   &1.18   \\
\textbf{Parallel Efficiency (\%)}     &100.0  &\textbf{66.9}  &50.9  &29.5  \\
\bottomrule[1.0pt]
\end{tabular}
\label{tab-8}
\end{table}

\subsection{Exploring the Impact of Hölder Conjugate Exponents on Experimental Results}
To further investigate the impact of Hölder conjugate exponents on experimental results, we explore various Hölder conjugate exponents. As shown in Table \ref{tab-5}, this study compares the performance under different Hölder hyperparameters ($\alpha$), KLD, and without the application of knowledge distillation. The experimental results indicate that when the Hölder divergence hyperparameter $\alpha = 1.1$, the performance improves by an average of 0.7\% compared to the case without knowledge distillation and by 6.1\% compared to KL divergence. This result underscores the crucial role of selecting an appropriate Hölder conjugate exponent ($\alpha$) in significantly enhancing model performance. 

Our findings clearly demonstrate the critical role of Hölder divergence in enhancing knowledge distillation techniques to improve segmentation task efficiency. Throughout our experiments, we consistently observe that setting the Hölder conjugate exponent to $\alpha = 1.1$ markedly improves the model's segmentation performance, further validating the effectiveness of Hölder divergence.

\subsection{Training Time Comparison Analysis}

In this section, we perform experiments on the BraTS 2018 dataset to comprehensively compare the training efficiency of our proposed method with several state-of-the-art methods, including ACN \cite{p10}, SMU-Net \cite{p5}, ProtoKD \cite{p6}, MML-MM-SSF \cite{p9}, MMCFormer \cite{p7}, $\text{M}^2\text{FTrans}$ \cite{pp15}, and GGMD \cite{p48}. Table 6 presents these experimental results, demonstrating that our method significantly outperforms existing state-of-the-art approaches in terms of training time. Specifically, our method achieves a training time of only 22.5 hours, representing a 33.8\% improvement in efficiency compared to the next-best algorithm. In contrast, the training times of existing methods are substantially higher: ACN requires 226.3 hours, SMU-Net 154.4 hours, ProtoKD 237.8 hours, MML-MM-SSF 425.1 hours, MMCFormer 130.5 hours, $\text{M}^2\text{FTrans}$ 100.7 hours, and GGMD 30.1 hours. These results highlight that our algorithm substantially reduces computational cost for the same task, offering a considerable advantage in practical scenarios. Through this comparative analysis, we clearly illustrate the superiority of our approach in terms of training efficiency, especially in scenarios that demand rapid processing and deployment.

\subsection{Quantitative Assessment of Parallel Computing Efficiency}

To evaluate the impact of parallelization on computational performance, we perform training experiments on the BraTS 2018 dataset using 1 to 4 GPUs. Table~\ref{tab-8} summarizes the results in terms of training time, speedup, and parallel efficiency, reflecting both performance gains and scalability.

When training with a single GPU, the process takes 30.1 hours. Using 2 GPUs reduces the training time to 22.5 hours, resulting in a speedup of 1.34× and a parallel efficiency of 66.9\%. With 3 GPUs, the training time further decreases to 19.7 hours, corresponding to a speedup of 1.53× and an efficiency of 50.9\%. However, with 4 GPUs, the training time increases to 25.5 hours, yielding a lower speedup of 1.18× and a parallel efficiency of just 29.5\%.

Fig.~\ref{fig:gpu-speedup} illustrates the non-linear relationship between the number of GPUs and the actual speedup achieved. The diminishing returns arise from factors such as communication overhead, load imbalance, and resource contention. The decrease in efficiency with 4 GPUs indicates that the parallelization strategy does not scale effectively beyond 3 GPUs for this dataset and model.

This analysis underscores the importance of evaluating both speedup and parallel efficiency in multi-GPU training. Optimizing workload distribution and communication strategies is essential to fully leverage the potential of parallel computing.

\subsection{Parameter Selection Analysis}

In our proposed method, selecting key parameters is essential for achieving robust segmentation performance under missing-modality conditions. This section provides a mathematical analysis of each parameter, including the Hölder divergence exponent $\alpha$, the mutual information knowledge transfer weighting parameter $\gamma_k$, and the learning rate $lr$ and batch size $b$, to justify their selection based on mathematical properties and ensure optimal model performance.

\subsubsection{Hölder Divergence Exponent $\alpha$}

The Hölder divergence between two probability distributions $p(x)$ and $q(x)$ is defined for conjugate exponents $\alpha > 1$ and $\beta = \frac{\alpha}{\alpha - 1}$ as:
\begin{equation}
    D_{\alpha}^{H}(p \| q) = -\log\left( \frac{\int p(x) q(x) \, dx}{ \| p(x) \|_{\alpha} \| q(x) \|_{\beta} } \right),
\end{equation}
where the $L^\alpha$-norm is defined as $\| p(x) \|_{\alpha} = \left( \int p(x)^{\alpha} \, dx \right)^{1/\alpha}$.

The choice of $\alpha$ directly influences the divergence’s sensitivity to discrepancies between $p(x)$ and $q(x)$, especially in non-symmetric data distributions. When $\alpha = 1$, Hölder divergence reduces to KLD, which is less suitable for handling the non-symmetry common in medical image segmentation due to class imbalance and other factors.


\begin{figure}[t]
	\centering
	\includegraphics[width=1.0\linewidth]{./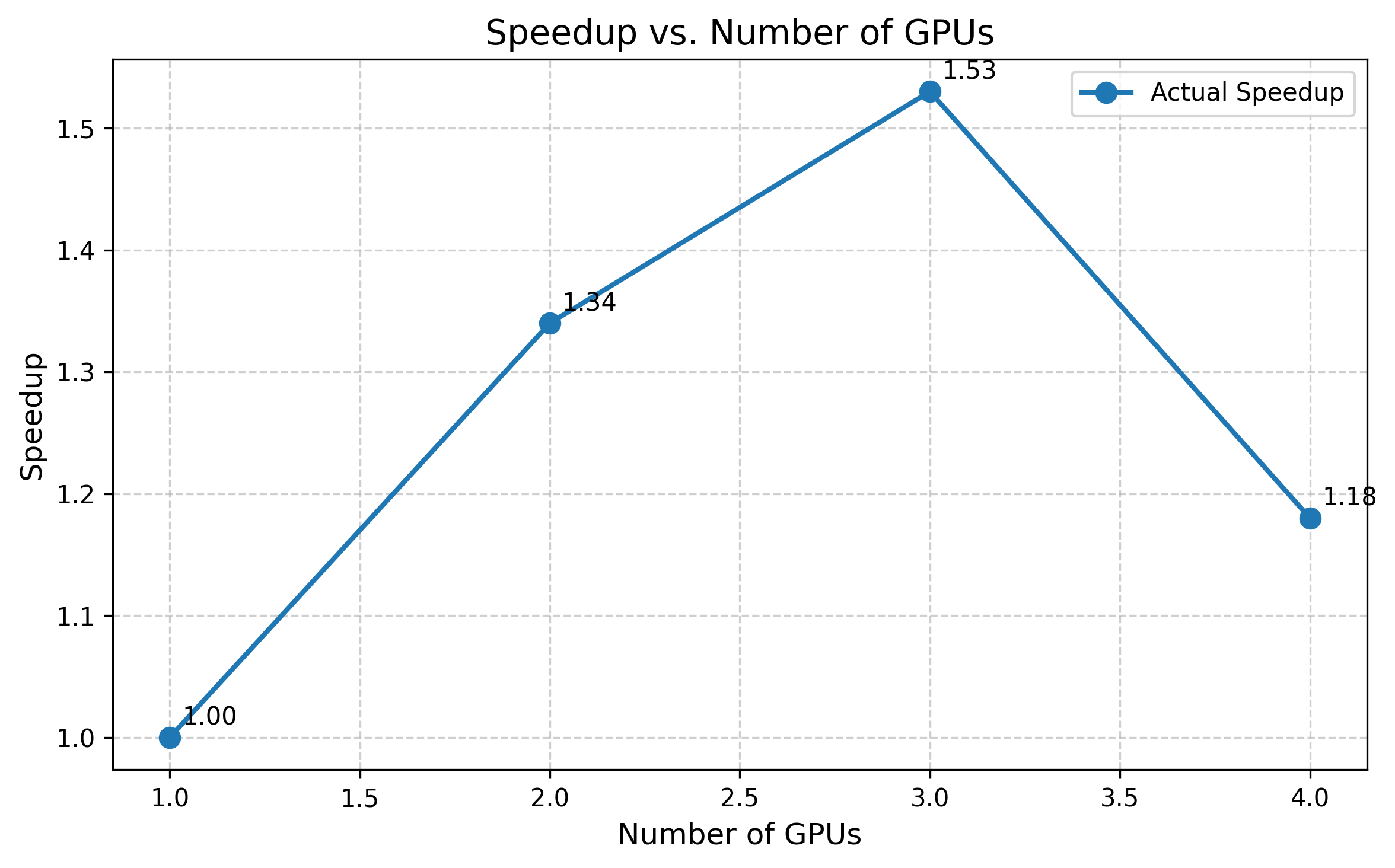}
	\caption{Analysis of the relationship between the number of GPUs and the observed computational speedup on the BraTS2018 dataset. This figure illustrates how actual speedup varies with increasing GPU count, demonstrating the parallel scalability of the proposed method. Both efficiency gains and potential scalability bottlenecks are highlighted as the number of GPUs increases.}
	\label{fig:gpu-speedup}
\end{figure}

For values of $\alpha$ slightly greater than 1, Hölder divergence becomes more responsive to significant differences in regions where $p(x)$ and $q(x)$ differ substantially. This effect results from the properties of the $L^\alpha$-norm, which gives more weight to higher discrepancies.

\begin{theorem}
Selecting $\alpha = 1.1$ in Hölder divergence provides a balance between sensitivity to distribution differences and robustness to noise, improving segmentation accuracy compared to both $\alpha = 1$ (KL divergence) and larger values of $\alpha$.
\end{theorem}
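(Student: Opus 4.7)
The plan is to establish the claim in three stages: a limiting analysis recovering KL divergence as $\alpha \to 1^{+}$, a sensitivity analysis on the conjugate exponent that identifies a regime where Hölder divergence strictly dominates KL divergence, and a stability argument that caps $\alpha$ from above. First I would expand $D_{\alpha}^{H}(p\|q)$ around $\alpha = 1$, using the identity $\|p\|_\alpha = \exp\left(\frac{1}{\alpha}\log\int p^\alpha\,dx\right)$ together with the Taylor expansion $p^\alpha = p + (\alpha-1)\, p\log p + O((\alpha-1)^2)$. Substituting into the definition and collecting terms should give $D_{\alpha}^{H}(p\|q) = (\alpha-1)\,\Phi(p,q) + O((\alpha-1)^2)$ for some functional $\Phi$ that, after normalization, reduces to the Kullback--Leibler divergence $\mathrm{KL}(p\|q)$ in the limit. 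This shows that $\alpha = 1$ is a degenerate case inheriting all the known weaknesses of KL divergence with respect to asymmetric support, per Theorem~\ref{theorem_3}.

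Next I would perform a first-order sensitivity analysis by computing $\partial_\alpha D_{\alpha}^{H}(p\|q)$ at $\alpha = 1^{+}$. The key step is to show that this derivative is strictly positive whenever $p \not\equiv q$, with magnitude governed by the variance of $\log p - \log q$ weighted by $p$. This establishes that for any small $\alpha - 1 > 0$, Hölder divergence produces a sharper penalty on tail mismatches than KL does, explaining why $\alpha$ marginally above $1$ improves discriminative gradients during optimization. To bound $\alpha$ from above, I would use the interpolation inequality $\|p\|_\alpha \le \|p\|_1^{1-\theta}\|p\|_\infty^\theta$ with $\theta = 1 - 1/\alpha$, showing that as $\alpha$ grows the divergence is increasingly dominated by the $L^\infty$ behavior of $p$, which amplifies the effect of noisy voxels and outlier annotations. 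A Lipschitz-constant estimate for the map $p \mapsto D_{\alpha}^{H}(p\|q)$ should grow superlinearly in $\alpha$, indicating deteriorating robustness.

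Combining the two bounds yields an admissible interval $(1, \alpha^{*})$ in which sensitivity gains outweigh noise amplification; within this interval I would argue that $\alpha = 1.1$ lies close to the lower boundary, keeping the Lipschitz constant within a controlled factor of the KL baseline while already providing a strictly positive first-order improvement in tail sensitivity. The final step is to reconcile this theoretical interval with the empirical optimum reported in Table~\ref{tab-5}, confirming that the choice $\alpha = 1.1$ is consistent with both analytical predictions and measured Dice scores, and applying Theorem~5 on total-loss convergence to conclude that the improved per-step gradient quality translates into higher segmentation accuracy at a critical point of $\mathcal{L}_{\text{All}}$.

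The main obstacle will be the last leg: deriving the specific numerical value $\alpha = 1.1$ from first principles is not feasible, since it depends on the empirical noise distribution of MRI intensities and the class-imbalance profile of BraTS. I would therefore frame the argument as identifying an optimal interval rather than an exact point, then appeal to the empirical evidence in Table~\ref{tab-5} to pin down $1.1$ as the representative value within that interval. A secondary difficulty is handling the non-symmetry of the conjugate pair $(\alpha,\beta)$ in the Taylor expansion, which I would address by using the constraint $\beta = \alpha/(\alpha-1)$ to eliminate $\beta$ before differentiating, ensuring the expansion depends on a single parameter.
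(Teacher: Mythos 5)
The paper's own proof of this theorem is only a few sentences long: it argues qualitatively that $\alpha$ near $1$ behaves like KL divergence and captures only small discrepancies, that increasing $\alpha$ makes the $L^\alpha$-norm more responsive to large values, and that very large $\alpha$ over-amplifies noise; it then pins the specific value $\alpha=1.1$ entirely on the empirical $6.1\%$ gain over KL reported in Table~\ref{tab-5}. Your plan arrives at essentially the same endpoint --- an admissible interval $(1,\alpha^{*})$ plus an appeal to Table~\ref{tab-5} to select $1.1$ --- and you are commendably explicit that the numerical value cannot be derived from first principles. The difference is that you propose real analysis to justify the interval, whereas the paper does not.

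The genuine gap is in your first and load-bearing step. For the HPD of Definition~\ref{def_1} with conjugate exponents $\alpha>1$ and $\beta=\alpha/(\alpha-1)$, the limit $\alpha\to 1^{+}$ forces $\beta\to+\infty$, so $\|q\|_\beta\to\|q\|_\infty$ rather than $\|q\|_1=1$. Since $\int p(x)q(x)\,dx$ does not depend on $\alpha$ and $\log\|p\|_\alpha\to 0$, the limit of $D_{\alpha}^{H}(p\|q)$ is $-\log\bigl(\int p q\,dx/\|q\|_\infty\bigr)$, which is not $\mathrm{KL}(p\|q)$, and there is no expansion of the form $(\alpha-1)\Phi(p,q)+O((\alpha-1)^2)$ with $\Phi$ reducing to KL. Your Taylor expansion $p^\alpha=p+(\alpha-1)p\log p+\cdots$ handles the $\|p\|_\alpha$ factor, but it cannot rescue the $\|q\|_\beta$ factor, whose dependence on $\alpha$ is singular at $\alpha=1$ precisely because of the conjugacy constraint you intend to use to eliminate $\beta$; consequently the claimed strictly positive derivative $\partial_\alpha D_\alpha^H$ at $\alpha=1^{+}$ "with magnitude governed by the variance of $\log p-\log q$" is not established for this divergence. (The paper itself asserts without proof that $\alpha=1$ recovers KL; that assertion is equally unsupported for Definition~\ref{def_1}, so you inherited the error rather than invented it.) The upper-bound half of your argument --- the interpolation inequality showing $L^\infty$ domination for large $\alpha$ --- is plausible for bounded softmax outputs and could be made rigorous, but without a valid lower-end analysis the "optimal interval" does not materialize, and what survives of the plan is the same qualitative-plus-empirical argument the paper already gives.
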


\begin{proof}
As $\alpha$ approaches 1, the divergence is close to KL divergence, capturing only small discrepancies and potentially neglecting significant differences. With increasing $\alpha$, the $L^\alpha$-norm becomes more sensitive to larger values in $p(x)$ and $q(x)$, effectively highlighting regions with high probability density. However, very large $\alpha$ values can lead to excessive sensitivity to noise.

Empirically, $\alpha = 1.1$ provides a moderate sensitivity increase that enhances segmentation accuracy by approximately 6.1\% over KL divergence (see Table~\ref{tab-5}). This value optimally balances capturing critical differences while maintaining robustness.
\end{proof}

\subsubsection{Mutual Information Knowledge Transfer Weighting Parameter $\gamma_k$}

In our knowledge transfer framework, we aim to maximize mutual information between features $d_f^{(k)}$ from the full-modality network and $d_m^{(k)}$ from the missing-modality network at each layer $k$, with the loss function defined as: 
\begin{equation}
\mathcal{L}_{\mathcal{MI}} = -\sum_{k=1}^{K} \gamma_k \mathbb{E}_{d_f^{(k)}, d_m^{(k)}} \left[ \log q\left( d_f^{(k)} \mid d_m^{(k)} \right) \right],
\end{equation}
where $q\left( d_f^{(k)} \mid d_m^{(k)} \right)$ approximates the conditional distribution $p\left( d_f^{(k)} \mid d_m^{(k)} \right)$, and $\gamma_k$ is a layer-specific weighting parameter.

We set $\gamma_k$ to increase with layer depth $k$, i.e., $\gamma_k = \gamma_0 \cdot k$, with $\gamma_0$ as a base weight, to emphasize the alignment of semantically rich, higher-level features.

\begin{theorem}
Increasing $\gamma_k$ with layer depth $k$ prioritizes the alignment of higher-level, semantically significant features, enhancing segmentation accuracy in missing-modality scenarios.
\end{theorem}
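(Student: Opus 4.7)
The plan is to combine the variational lower bound on mutual information with a weighted-gradient argument, then invoke the hierarchical feature-learning property of deep networks to connect larger $\gamma_k$ at deeper layers with better alignment of semantically rich representations, and finally chain this with Theorem~2 to recover a segmentation-accuracy guarantee. I will treat $\mathcal{L}_{\mathcal{MI}}$ as a sum of per-layer terms $\mathcal{L}_{\mathcal{MI}}^{(k)} = -\gamma_k\,\mathbb{E}[\log q(d_f^{(k)} \mid d_m^{(k)})]$ and note that each such term is (up to a constant) a variational lower bound on $\mathrm{MI}(d_f^{(k)}; d_m^{(k)})$ of the form $\mathrm{MI}(d_f^{(k)}; d_m^{(k)}) \geq H(d_f^{(k)}) + \mathbb{E}[\log q(d_f^{(k)} \mid d_m^{(k)})]$.

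Next, I would observe that for any gradient-based optimizer the parameter update attributable to layer $k$ alignment is proportional to $\gamma_k \nabla_\theta \mathbb{E}[\log q(d_f^{(k)} \mid d_m^{(k)})]$, so setting $\gamma_k = \gamma_0 \cdot k$ redistributes the effective optimization pressure toward deeper layers and tightens the variational bound more aggressively at higher $k$. I would then invoke the standard hierarchical feature-learning property — formalizable via the Markov chain $x \to d^{(1)} \to d^{(2)} \to \cdots \to d^{(K)} \to Y$ — under which the label $Y$ is progressively better determined by deeper representations, so $\mathrm{MI}(Y; d^{(k)})$ is non-decreasing in $k$ by the data-processing inequality. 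Consequently, a unit gain in $\mathrm{MI}(d_f^{(k)}; d_m^{(k)})$ at large $k$ produces a larger decrease in $H(Y \mid d_m^{(k)})$ than the same gain at small $k$, so the schedule $\gamma_k \propto k$ concentrates effort where it most reduces label uncertainty. Applying Theorem~2 then converts this extra reduction in conditional entropy into improved segmentation accuracy under missing modalities.

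Finally, to rule out pathological behaviour — for instance, over-weighting the deepest layer starving shallow layers that still carry tumour-boundary detail — I would verify that any bounded, monotonically non-decreasing $\{\gamma_k\}$ keeps $\mathcal{L}_{\text{All}}$ smooth and bounded below, so the convergence guarantee of Theorem~5 continues to hold and no earlier-layer gradient signal vanishes. The main obstacle is justifying the hierarchical-semantics assumption rigorously rather than heuristically: it is standard folklore in representation learning but difficult to derive from first principles for a specific 3D U-Net. I would address it either by adopting it as an empirically validated hypothesis (consistent with Hypothesis~2 and the ablations in Table~\ref{tab-6}) or by encoding it as a Markov-chain assumption on the feature hierarchy, after which the chain of inequalities above yields the stated conclusion.
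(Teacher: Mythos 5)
Your proposal contains, at its core, exactly the argument the paper gives: deeper layers carry richer semantic content, so up-weighting their alignment terms concentrates optimization on the features that matter most for predicting $Y$, which reduces $H(Y\mid d_m)$ and, via Theorem~2, improves accuracy. The paper's proof is literally that two-sentence heuristic and nothing more, so at that level you have reproduced it. The extra scaffolding you add --- the per-layer variational lower bound, the observation that the layer-$k$ gradient contribution is scaled by $\gamma_k$, and the check that a bounded monotone $\{\gamma_k\}$ preserves the smoothness and boundedness hypotheses of Theorem~5 --- is sound and is genuinely more than the paper offers.

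The one step that does not hold as stated is the data-processing-inequality argument. You posit the chain $x \to d^{(1)} \to \cdots \to d^{(K)} \to Y$ and conclude that $\mathrm{MI}(Y; d^{(k)})$ is non-decreasing in $k$. That chain is correct for the \emph{predicted} label, but the quantity you need to control, $H(Y \mid d_m)$, involves the ground-truth label, which is exogenous: the relevant Markov structure is $Y \leftarrow x \to d^{(1)} \to \cdots \to d^{(K)}$, i.e. $Y - x - d^{(k)}$, and the DPI then gives $\mathrm{MI}(Y; d^{(k)})$ \emph{non-increasing} in depth --- the opposite monotonicity. So the claim that a unit of mutual information gained at large $k$ reduces label uncertainty more than the same gain at small $k$ does not follow from the DPI; it must be assumed outright (for instance, as the information-bottleneck ideal that deeper features discard only label-irrelevant information, which is not a theorem). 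Since you already identified the hierarchical-semantics assumption as the weak point and offered to adopt it as an empirically validated hypothesis, the fix is simply to route the entire ``deeper is more label-relevant'' claim through that assumption rather than through the DPI. With that correction your argument is at least as complete as the paper's, which never attempts to justify the claim at all.
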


\begin{proof}
In deeper layers, features represent higher-level abstractions with more complex semantic information. Thus, aligning features in these layers has a greater impact on segmentation performance.

By assigning larger weights $\gamma_k$ to deeper layers, the loss function $\mathcal{L}_{\mathcal{MI}}$ prioritizes aligning these essential features. This emphasis reduces the conditional entropy $H(Y \mid d_m)$, enhancing segmentation accuracy.
\end{proof}

\subsubsection{Learning Rate $lr$ and Batch Size $b$}

The learning rate $lr$ and batch size $b$ are critical for convergence and stability in training. Using stochastic gradient descent, the model parameter update rule is: 
\begin{equation}
\theta^{(t+1)} = \theta^{(t)} - \eta \nabla_{\theta} \mathcal{L}_{\text{all}}(\theta^{(t)}),
\end{equation}
where $\eta$ is the effective step size.

We select $lr = 0.0008$ and $b = 8$ to balance convergence speed and stability.

Through analysis, we justify the selection of key parameters: 1. The Hölder divergence exponent $\alpha = 1.1$ achieves a balance between sensitivity to significant discrepancies and robustness to noise, enhancing segmentation accuracy. 2. The mutual information weighting parameter $\gamma_k$ increases with layer depth $k$, emphasizing the alignment of high-level features, which improves knowledge transfer effectiveness. 3. The learning rate $lr = 0.0008$ and batch size $b = 8$ ensure stable convergence to a local minimum, balancing convergence speed and stability.

These selections, based on mathematical properties of the divergence measures, optimization algorithms, and neural network architectures, contribute to the robustness and effectiveness of our model in handling missing-modality segmentation tasks.

\begin{table}
\centering
\caption{Exploring the Impact of Hölder Conjugate Exponents on Experimental Results Based on the Brats 2018 Dataset. This table illustrates how varying the Hölder conjugate exponents affects the experimental outcomes derived from the BraTS 2018 dataset. It highlights the relationships between different conjugate exponent values and their influence on the performance metrics within the context of the experiments.}
\setlength{\tabcolsep}{10pt}
\begin{tabular}{cc|ccc|c}
\toprule[1.0pt]
\multicolumn{2}{c}{\textbf{Methods}} & \multicolumn{3}{c}{\textbf{Dice}} & \multicolumn{1}{c}{\textbf{}} \\
\midrule[0.5pt]
\textbf{divergence }  &$\bm{\alpha}$    &\textbf{WT} & \textbf{TC} & \textbf{ET}  &\textbf{Avg.} \\
\midrule[0.5pt]
-                &-                &87.8  &82.9  &67.4  &79.4    \\
\textbf{KL}      &-                &84.5  &76.2  &61.4  &74.0  \\
\textbf{Hölder}  &1.05    &87.8  &82.6  &69.2  &79.9  \\
\textbf{Hölder}  &1.08    &88.2  &82.6  &68.7  &79.9  \\
\textbf{Hölder}  &\textbf{1.10}   &\textbf{88.2} &82.6  &\textbf{69.4}  &\textbf{80.1}  \\
\textbf{Hölder}  &1.15   &88.1  &82.8  &67.9  &79.6   \\
\textbf{Hölder}  &1.20   &87.9  &\textbf{83.1} &68.1  &79.7  \\
\bottomrule[1.0pt]
\end{tabular}
\label{tab-5}
\end{table}

\subsection{Scalability and Parameter Tuning in Clinical Settings}

The proposed model's scalability and adaptability are critical for practical deployment across diverse clinical environments. Given the high computational demands of processing multimodal MRI data, optimizing model efficiency while preserving segmentation accuracy is essential. Our framework, based on a parallel 3D U-Net architecture, supports independent processing of each modality, making it well-suited for scaling in settings with variable hardware capabilities. This modular design allows certain modalities to be excluded dynamically without retraining the model entirely, thereby reducing computational costs when fewer modalities are available.

To further enhance scalability, the model could benefit from techniques such as model pruning and knowledge distillation, which decrease the computational burden while retaining high accuracy. Additionally, using mixed-precision training can accelerate inference without significant accuracy loss, making the model more adaptable for clinics with limited GPU resources.

For parameter tuning in diverse clinical settings, we recommend an adaptive approach to selecting critical hyperparameters like the Hölder divergence exponent and learning rate. Automated tuning strategies, such as Bayesian optimization or Hyperband, can be leveraged to dynamically adjust these parameters based on the characteristics of local datasets, such as tumor size distribution and modality availability. Additionally, fine-tuning the model on a small subset of local data can help optimize its performance to specific patient demographics or imaging protocols, enhancing the robustness and accuracy of segmentation in varying clinical contexts.

\subsection{Ablation Study}

In this subsection, we validate the effectiveness of the parallel network framework through a series of ablation experiments, and explore the impact of mutual information knowledge transfer (denoted as $\mathcal{L}{MI}$) between full-modal and missing-modal data. Additionally, we investigate the influence of the Hölder divergence-based loss function ($\mathcal{L}{HD}$) on model performance. Compared to traditional multimodal methods, our proposed parallel network framework selectively activates data related to available modalities. This approach effectively preserves modality-specific information, thus enhancing the model's ability to capture distinctive features within each modality. Furthermore, introducing the Hölder divergence-based loss function optimizes multimodal data processing by accurately quantifying mutual information across modalities, thereby facilitating more effective feature alignment.

Initially, we assess the utility of the parallel network framework and its individual components, including the Dice loss function ($\mathcal{L}{Dice}$), mutual information knowledge transfer between full-modal and missing-modal data ($\mathcal{L}{MI}$), and Hölder divergence-based knowledge distillation ($\mathcal{L}{HD}$). The experimental results are summarized in Table \ref{tab-6}. When using only the traditional segmentation loss $\mathcal{L}{Dice}$ as the baseline, employing our parallel network framework significantly enhances model performance. Specifically, performance improves by 13.4\% when three modalities are missing, by 12.0\% when two modalities are missing, by 7.8\% when one modality is missing, and by 10.9\% when all modalities are available.

Furthermore, integrating mutual information knowledge transfer ($\mathcal{L}{MI}$) and Hölder divergence-based knowledge distillation ($\mathcal{L}{HD}$) within the parallel network framework leads to additional improvements over using only the traditional segmentation loss ($\mathcal{L}{Dice}$). Specifically, when three modalities are missing, performance increases by an additional 12.2\% with $\mathcal{L}{MI}$ and 12.7\% with $\mathcal{L}{HD}$. When two modalities are missing, improvements are 7.8\% and 8.2\%, respectively. With one modality missing, enhancements are 6.8\% and 7.2\%, respectively. When all modalities are available, improvements are 6.5\% and 6.9\%, respectively. On average, performance across different modality-input scenarios improves by 8.7\% with $\mathcal{L}{MI}$ and by 9.1\% with $\mathcal{L}_{HD}$. Finally, combining the parallel network framework, mutual information knowledge transfer, and Hölder divergence-based knowledge distillation yields the best experimental results, further validating the effectiveness and superiority of our proposed approach.

\subsection{Discussion and Limitations}
The proposed approach exhibits strong performance in brain tumor segmentation under missing-modality conditions, leveraging Hölder divergence and mutual information for effective knowledge transfer. The parallel processing framework efficiently manages each modality independently, allowing the model to dynamically adjust when one or more modalities are unavailable. This adaptability enhances its utility in clinical settings, where complete multimodal data may not always be accessible.

However, this method has some limitations. First, it requires significant computational resources due to the multiple modality-specific processing paths and the use of 3D U-Net architectures. High memory and computational power demands may limit its application in resource-constrained environments. Additionally, selecting optimal parameters, such as the Hölder divergence exponent and learning rate, necessitates extensive experimentation. The model’s performance is sensitive to these hyperparameters, making it challenging to balance accuracy with computational efficiency. Lastly, although Hölder divergence improves robustness to asymmetrical distributions, further research is needed to refine this measure for real-time adaptation and broader applications.

Future work could focus on optimizing the model architecture for faster inference and reduced memory usage, potentially by decreasing the depth or size of the 3D U-Net model or applying model pruning techniques. Additionally, adaptive parameter selection, such as dynamic adjustment of the Hölder exponent during training, may further improve segmentation accuracy and enhance adaptability across various medical imaging datasets.

\section{Conclusion}
This study presents an innovative approach to brain tumor segmentation under missing-modality conditions, addressing a critical challenge in clinical applications. Leveraging a parallel 3D U-Net architecture with Hölder divergence and mutual information-based knowledge transfer, our model achieves high segmentation accuracy even when one or more MRI modalities are unavailable. The integration of these elements allows for robust preservation of modality-specific information, supporting flexible parameter adjustment to handle missing data scenarios effectively.

Extensive evaluations on the BraTS datasets demonstrate the model’s superior performance over existing methods, particularly in low-data conditions, showcasing its robustness and clinical potential. By efficiently combining single-modality parallel processing and divergence-based learning, this framework provides a scalable solution that adapts well across varying imaging protocols and patient cases. This work lays the foundation for further advancements in segmentation models designed for incomplete multimodal data, paving the way for broader adoption in diverse clinical environments.


	\ifCLASSOPTIONcaptionsoff
	\newpage
	\fi
	
	\bibliographystyle{IEEEtran}
	\bibliography{IEEEabrv,references.bib}

\begin{thebibliography}{10}
\providecommand{\url}[1]{#1}
\csname url@samestyle\endcsname
\providecommand{\newblock}{\relax}
\providecommand{\bibinfo}[2]{#2}
\providecommand{\BIBentrySTDinterwordspacing}{\spaceskip=0pt\relax}
\providecommand{\BIBentryALTinterwordstretchfactor}{4}
\providecommand{\BIBentryALTinterwordspacing}{\spaceskip=\fontdimen2\font plus
\BIBentryALTinterwordstretchfactor\fontdimen3\font minus \fontdimen4\font\relax}
\providecommand{\BIBforeignlanguage}[2]{{%
\expandafter\ifx\csname l@#1\endcsname\relax
\typeout{** WARNING: IEEEtran.bst: No hyphenation pattern has been}%
\typeout{** loaded for the language `#1'. Using the pattern for}%
\typeout{** the default language instead.}%
\else
\language=\csname l@#1\endcsname
\fi
#2}}
\providecommand{\BIBdecl}{\relax}
\BIBdecl

\bibitem{pp27}
Y.~Liu, Y.~Shi, F.~Mu, J.~Cheng, and X.~Chen, ``{Glioma Segmentation-Mriented Multi-Modal MR Image Fusion with Adversarial Learning},'' \emph{IEEE/CAA Journal of Automatica Sinica}, vol.~9, no.~8, pp. 1528--1531, 2022.

\bibitem{pp28}
B.~Liu, R.~Song, Y.~Xiang, J.~Du, W.~Ruan, and J.~Hu, ``{Self-Supervised Entity Alignment Based on Multi-Modal Contrastive Learning},'' \emph{IEEE/CAA Journal of Automatica Sinica}, vol.~9, no.~11, pp. 2031--2033, 2022.

\bibitem{pp29}
S.~Mei, Y.~Ma, X.~Mei, J.~Huang, and F.~Fan, ``{S2-Net: Self-Supervision Guided Feature Representation Learning for Cross-Modality Images},'' \emph{IEEE/CAA Journal of Automatica Sinica}, vol.~9, no.~10, pp. 1883--1885, 2022.

\bibitem{pp32}
Y.-P. Huang, K.~Bhalla, H.-C. Chu, Y.-C. Lin, H.-C. Kuo, W.-J. Chu, and J.-H. Lee, ``{Wavelet K-Means Clustering and Fuzzy-Based Method for Segmenting MRI Images Depicting Parkinson’s Disease},'' \emph{International Journal of Fuzzy Systems}, pp. 1--13, 2021.

\bibitem{p1}
D.~Shah, A.~Barve, B.~Vala, and J.~Gandhi, ``{A Survey on Brain Tumor Segmentation with Missing MRI Modalities},'' in \emph{International Conference on Information Technology}.\hskip 1em plus 0.5em minus 0.4em\relax Springer, 2023, pp. 299--308.

\bibitem{p5}
R.~Azad, N.~Khosravi, and D.~Merhof, ``{SMU-Net: Style Matching U-Net for Brain Tumor Segmentation with Missing Modalities},'' in \emph{International Conference on Medical Imaging with Deep Learning, {MIDL} 2022, 6-8 July 2022, Zurich, Switzerland}, ser. Proceedings of Machine Learning Research, vol. 172.\hskip 1em plus 0.5em minus 0.4em\relax {PMLR}, 2022, pp. 48--62.

\bibitem{p6}
S.~Wang, Z.~Yan, D.~Zhang, H.~Wei, Z.~Li, and R.~Li, ``{Prototype Knowledge Distillation for Medical Segmentation with Missing Modality},'' in \emph{{IEEE} International Conference on Acoustics, Speech and Signal Processing {ICASSP} 2023, Rhodes Island, Greece, June 4-10, 2023}.\hskip 1em plus 0.5em minus 0.4em\relax {IEEE}, 2023, pp. 1--5.

\bibitem{p7}
S.~Karimijafarbigloo, R.~Azad, A.~Kazerouni, S.~Ebadollahi, and D.~Merhof, ``{MMCFormer: Missing Modality Compensation Transformer for Brain Tumor Segmentation},'' in \emph{Medical Imaging with Deep Learning}.\hskip 1em plus 0.5em minus 0.4em\relax PMLR, 2024, pp. 1144--1162.

\bibitem{p44}
Y.~Ding, X.~Yu, and Y.~Yang, ``{RFNet: Region-Aware Fusion Network for Incomplete Multi-Modal Brain Tumor Segmentation},'' in \emph{{Proceedings of the IEEE/CVF International Conference on Computer Vision}}, 2021, pp. 3975--3984.

\bibitem{p47}
H.~Ting and M.~Liu, ``{Multimodal Transformer of Incomplete {MRI} Data for Brain Tumor Segmentation},'' \emph{{IEEE Journal of Biomedical and Health Informatics}}, vol.~28, no.~1, pp. 89--99, 2024.

\bibitem{p9}
H.~Wang, Y.~Chen, C.~Ma, J.~Avery, L.~Hull, and G.~Carneiro, ``{Multi-Modal Learning with Missing Modality via Shared-Specific Feature Modelling},'' in \emph{Proceedings of the IEEE/CVF Conference on Computer Vision and Pattern Recognition}, 2023, pp. 15\,878--15\,887.

\bibitem{p48}
H.~Wang, S.~Luo, G.~Hu, and J.~Zhang, ``{Gradient-Guided Modality Decoupling for Missing-Modality Robustness},'' in \emph{Proceedings of the AAAI Conference on Artificial Intelligence}, vol.~38, no.~14, 2024, pp. 15\,483--15\,491.

\bibitem{pp11}
Y.~Qiu, Z.~Zhao, H.~Yao, D.~Chen, and Z.~Wang, ``{Modal-Aware Visual Prompting for Incomplete Multi-Modal Brain Tumor Segmentation},'' in \emph{Proceedings of the 31st {ACM} International Conference on Multimedia, {MM} 2023, Ottawa, ON, Canada, 29 October 2023- 3 November 2023}.\hskip 1em plus 0.5em minus 0.4em\relax {ACM}, 2023, pp. 3228--3239.

\bibitem{pp12}
D.~Chen, Y.~Qiu, and Z.~Wang, ``{Query Re-Training for Modality-Gnostic Incomplete Multi-modal Brain Tumor Segmentation},'' in \emph{International Conference on Medical Image Computing and Computer-Assisted Intervention}.\hskip 1em plus 0.5em minus 0.4em\relax Springer, 2023, pp. 135--146.

\bibitem{pp34}
W.~Qin and X.~Luo, ``{Asynchronous Parallel Fuzzy Stochastic Gradient Descent for High-Dimensional Incomplete Data Representation},'' \emph{IEEE Transactions on Fuzzy Systems}, vol.~32, no.~2, pp. 445--459, 2024.

\bibitem{pp35}
X.~Luo, W.~Qin, A.~Dong, K.~Sedraoui, and M.~Zhou, ``{Efficient and High-quality Recommendations via Momentum-incorporated Parallel Stochastic Gradient Descent-Based Learning},'' \emph{IEEE/CAA Journal of Automatica Sinica}, vol.~8, no.~2, pp. 402--411, 2021.

\bibitem{pp36}
X.~Luo, M.~Chen, H.~Wu, Z.~Liu, H.~Yuan, and M.~Zhou, ``{Adjusting Learning Depth in Nonnegative Latent Factorization of Tensors for Accurately Modeling Temporal Patterns in Dynamic QoS Data},'' \emph{IEEE Transactions on Automation Science and Engineering}, vol.~18, no.~4, pp. 2142--2155, 2021.

\bibitem{pp37}
W.~Qin, X.~Luo, S.~Li, and M.~Zhou, ``{Parallel Adaptive Stochastic Gradient Descent Algorithms for Latent Factor Analysis of High-Dimensional and Incomplete Industrial Data},'' \emph{IEEE Transactions on Automation Science and Engineering}, vol.~21, no.~3, pp. 2716--2729, 2024.

\bibitem{p36}
W.-G. Chang, T.~You, S.~Seo, S.~Kwak, and B.~Han, ``{Domain-Specific Batch Normalization for Unsupervised Domain Adaptation},'' in \emph{{Proceedings of the IEEE/CVF Conference on Computer Vision and Pattern Recognition}}, 2019, pp. 7354--7362.

\bibitem{p52}
S.~Ahn, S.~X. Hu, A.~Damianou, N.~D. Lawrence, and Z.~Dai, ``{Variational Information Distillation for Knowledge Transfer},'' in \emph{{Proceedings of the IEEE/CVF Conference on Computer Vision and Pattern Recognition}}, 2019, pp. 9163--9171.

\bibitem{p37}
F.~Nielsen, K.~Sun, and S.~Marchand-Maillet, ``{On H{\"o}lder Projective Divergences},'' \emph{Entropy}, vol.~19, no.~3, p. 122, 2017.

\bibitem{pp19}
R.~Cheng, Z.~Sun, Y.~Zhang, and C.~Li, ``Robust divergence learning for missing-modality segmentation,'' in \emph{2024 China Automation Congress (CAC)}, 2024, pp. 2077--2083.

\bibitem{li2021self}
M.~Li, X.~Huang, and Z.~Zhang, ``Self-supervised geometric features discovery via interpretable attention for vehicle re-identification and beyond,'' in \emph{ICCV}, 2021.

\bibitem{li2021exploiting}
M.~Li, J.~Liu, C.~Zheng, X.~Huang, and Z.~Zhang, ``Exploiting multi-view part-wise correlation via an efficient transformer for vehicle re-identification,'' \emph{TOM}, 2021.

\bibitem{Yan_2025_CVPR}
W.~Yan, M.~Li, H.~Li, S.~Shao, and R.~T. Tan, ``Synthetic-to-real self-supervised robust depth estimation via learning with motion and structure priors,'' in \emph{Proceedings of the Computer Vision and Pattern Recognition Conference (CVPR)}, June 2025, pp. 21\,880--21\,890.

\bibitem{Liu_2025_CVPR}
S.~Liu, J.~Li, G.~Zhao, Y.~Zhang, X.~Meng, F.~R. Yu, X.~Ji, and M.~Li, ``Eventgpt: Event stream understanding with multimodal large language models,'' in \emph{Proceedings of the Computer Vision and Pattern Recognition Conference (CVPR)}, June 2025, pp. 29\,139--29\,149.

\bibitem{zhao2025favchat}
F.~Zhao, M.~Li, L.~Xu, W.~Jiang, J.~Gao, and D.~Yan, ``Favchat: Unlocking fine-grained facial video understanding with multimodal large language models,'' \emph{arXiv preprint arXiv:2503.09158}, 2025.

\bibitem{li2025uni}
J.~Li, X.~Qiu, L.~Xu, L.~Guo, D.~Qu, T.~Long, C.~Fan, and M.~Li, ``Unif2ace: Fine-grained face understanding and generation with unified multimodal models,'' \emph{arXiv preprint arXiv:2503.08120}, 2025.

\bibitem{li2023dr}
M.~Li, H.~Fu, S.~He, H.~Fan, J.~Liu, J.~Keppo, and M.~Z. Shou, ``Dr-fer: Discriminative and robust representation learning for facial expression recognition,'' \emph{IEEE Transactions on Multimedia}, vol.~26, pp. 6297--6309, 2023.

\bibitem{li2023stprivacy}
M.~Li, X.~Xu, H.~Fan, P.~Zhou, J.~Liu, J.-W. Liu, J.~Li, J.~Keppo, M.~Z. Shou, and S.~Yan, ``Stprivacy: Spatio-temporal privacy-preserving action recognition,'' in \emph{ICCV}, 2023.

\bibitem{li2024instant3d}
M.~Li, P.~Zhou, J.-W. Liu, J.~Keppo, M.~Lin, S.~Yan, and X.~Xu, ``Instant3d: instant text-to-3d generation,'' \emph{IJCV}, 2024.

\bibitem{liu2024realera}
Y.~Liu, J.~An, W.~Zhang, M.~Li, D.~Wu, J.~Gu, Z.~Lin, and W.~Wang, ``Realera: Semantic-level concept erasure via neighbor-concept mining,'' \emph{arXiv preprint arXiv:2410.09140}, 2024.

\bibitem{zhuang2025vistorybench}
C.~Zhuang, A.~Huang, W.~Cheng, J.~Wu, Y.~Hu, J.~Liao, Z.~Huang, H.~Wang, X.~Liao, W.~Cai \emph{et~al.}, ``Vistorybench: Comprehensive benchmark suite for story visualization,'' \emph{arXiv preprint arXiv:2505.24862}, 2025.

\bibitem{p2}
F.~Isensee, P.~F. Jaeger, S.~A. Kohl, J.~Petersen, and K.~H. Maier-Hein, ``{NNU-Net: a Self-Configuring Method for Deep Learning-Based Biomedical Image Segmentation},'' \emph{Nature Methods}, vol.~18, no.~2, pp. 203--211, 2021.

\bibitem{p15}
M.~Havaei, N.~Guizard, N.~Chapados, and Y.~Bengio, ``{Hemis: Hetero-Modal Image Segmentation},'' in \emph{Medical Image Computing and Computer-Assisted Intervention--MICCAI 2016: 19th International Conference, Athens, Greece, October 17-21, 2016, Proceedings, Part II 19}.\hskip 1em plus 0.5em minus 0.4em\relax Springer, 2016, pp. 469--477.

\bibitem{p10}
Y.~Wang, Y.~Zhang, Y.~Liu, Z.~Lin, J.~Tian, C.~Zhong, Z.~Shi, J.~Fan, and Z.~He, ``{ACN: Adversarial Co-Training Network for Brain Tumor Segmentation with Missing Modalities},'' in \emph{Medical Image Computing and Computer Assisted Intervention--MICCAI 2021: 24th International Conference, Strasbourg, France, September 27--October 1, 2021, Proceedings, Part VII 24}.\hskip 1em plus 0.5em minus 0.4em\relax Springer, 2021, pp. 410--420.

\bibitem{p45}
Y.~Zhang, N.~He, J.~Yang, Y.~Li, D.~Wei, Y.~Huang, Y.~Zhang, Z.~He, and Y.~Zheng, ``{MMformer: Multimodal Medical Transformer for Incomplete Multimodal Learning of Brain Tumor Segmentation},'' in \emph{{International Conference on Medical Image Computing and Computer-Assisted Intervention}}.\hskip 1em plus 0.5em minus 0.4em\relax Springer, 2022, pp. 107--117.

\bibitem{pp33}
A.~Hatamizadeh, V.~Nath, Y.~Tang, D.~Yang, H.~R. Roth, and D.~Xu, ``{Swin Unetr: Swin Transformers for Semantic Segmentation of Brain Tumors in MRI Images},'' in \emph{International MICCAI brainlesion workshop}.\hskip 1em plus 0.5em minus 0.4em\relax Springer, 2021, pp. 272--284.

\bibitem{pp14}
Z.~Liu, J.~Wei, R.~Li, and J.~Zhou, ``{SFusion: Self-Attention based N-to-One Multimodal Fusion Block},'' in \emph{International Conference on Medical Image Computing and Computer-Assisted Intervention}.\hskip 1em plus 0.5em minus 0.4em\relax Springer, 2023, pp. 159--169.

\bibitem{pp15}
J.~Shi, L.~Yu, Q.~Cheng, X.~Yang, K.~Cheng, and Z.~Yan, ``{MFTrans: Modality-Masked Fusion Transformer for Incomplete Multi-Modality Brain Tumor Segmentation},'' \emph{IEEE Journal of Biomedical and Health Informatics}, vol.~28, no.~1, pp. 379--390, 2024.

\bibitem{pp9}
A.~Konwer, X.~Hu, J.~Bae, X.~Xu, C.~Chen, and P.~Prasanna, ``{Enhancing Modality-Agnostic Representations via Meta-learning for Brain Tumor Segmentation},'' in \emph{{IEEE/CVF} International Conference on Computer Vision, {ICCV} 2023, Paris, France, October 1-6, 2023}.\hskip 1em plus 0.5em minus 0.4em\relax {IEEE}, 2023, pp. 21\,358--21\,368.

\bibitem{pp13}
Y.~Qiu, D.~Chen, H.~Yao, Y.~Xu, and Z.~Wang, ``{Scratch Each Other's Back: Incomplete Multi-Modal Brain Tumor Segmentation via Category Aware Group Self-Support Learning},'' in \emph{Proceedings of the IEEE/CVF International Conference on Computer Vision}, 2023, pp. 21\,317--21\,326.

\bibitem{p11}
H.~G. Hoang, B.-N. Vo, B.-T. Vo, and R.~Mahler, ``{The Cauchy--Schwarz Divergence for Poisson Point Processes},'' \emph{IEEE Transactions on Information Theory}, vol.~61, no.~8, pp. 4475--4485, 2015.

\bibitem{p12}
F.~Nielsen, K.~Sun, and S.~Marchand-Maillet, ``{K-Means Clustering with H{\"o}lder Divergences},'' in \emph{Geometric Science of Information: Third International Conference, GSI 2017, Paris, France, November 7-9, 2017, Proceedings 3}.\hskip 1em plus 0.5em minus 0.4em\relax Springer, 2017, pp. 856--863.

\bibitem{p53}
G.~Hinton, O.~Vinyals, and J.~Dean, ``{Distilling the Knowledge in a Neural Network},'' \emph{arXiv preprint arXiv:1503.02531}, 2015.

\bibitem{pp30}
Z.~Yin, J.~Pu, Y.~Zhou, and X.~Xue, ``{Two-Stage Approach for Targeted Knowledge Transfer in Self-Knowledge Distillation},'' \emph{IEEE/CAA Journal of Automatica Sinica}, vol.~11, no.~11, pp. 2270--2283, 2024.

\bibitem{pp31}
Z.~Zheng, L.~Teng, W.~Zhang, N.~Wu, and S.~Teng, ``{Knowledge Transfer Learning via Dual Density Sampling for Resource-Limited Domain Adaptation},'' \emph{IEEE/CAA Journal of Automatica Sinica}, vol.~10, no.~12, pp. 2269--2291, 2023.

\bibitem{p46}
H.~Liu, D.~Wei, D.~Lu, J.~Sun, L.~Wang, and Y.~Zheng, ``{M3AE: Multimodal Representation Learning for Brain Tumor Segmentation with Missing Modalities},'' in \emph{{Proceedings of the AAAI Conference on Artificial Intelligence}}, vol.~37, no.~2, 2023, pp. 1657--1665.

\bibitem{p54}
D.~Barber and F.~Agakov, ``{The IM Algorithm: a Variational Approach to Information Maximization},'' \emph{{Advances in Neural Information Processing Systems}}, vol.~16, no. 320, p. 201, 2004.

\bibitem{p49}
F.~Milletari, N.~Navab, and S.-A. Ahmadi, ``{V-Net: Fully Convolutional Neural Networks for Volumetric Medical Image Segmentation},'' in \emph{{2016 Fourth International Conference on 3D Vision (3DV)}}.\hskip 1em plus 0.5em minus 0.4em\relax IEEE, 2016, pp. 565--571.

\bibitem{pp1}
I.~Goodfellow, ``{Deep Learning},'' 2016.

\bibitem{pp2}
T.~M. Cover, \emph{{Elements of Information Theory}}.\hskip 1em plus 0.5em minus 0.4em\relax John Wiley \& Sons, 1999.

\bibitem{pp4}
G.~Huang, Z.~Liu, L.~Van Der~Maaten, and K.~Q. Weinberger, ``{Densely Connected Convolutional Networks},'' in \emph{Proceedings of the IEEE Conference on Computer Vision and Pattern Recognition}, 2017, pp. 4700--4708.

\bibitem{pp5}
K.~C. Tezcan, C.~F. Baumgartner, R.~Luechinger, K.~P. Pruessmann, and E.~Konukoglu, ``{MR Image Reconstruction Using Deep Density Priors},'' \emph{IEEE Transactions on Medical Imaging}, vol.~38, no.~7, pp. 1633--1642, 2018.

\bibitem{pp6}
Y.~Nesterov, \emph{{Introductory Lectures on Convex Optimization: A Basic Course}}.\hskip 1em plus 0.5em minus 0.4em\relax Springer Science \& Business Media, 2013, vol.~87.

\bibitem{pp7}
H.~Robbins and S.~Monro, ``{A Stochastic Approximation Method},'' \emph{The Annals of Mathematical Statistics}, pp. 400--407, 1951.

\bibitem{pp8}
R.~D. Hjelm, A.~Fedorov, S.~Lavoie{-}Marchildon, K.~Grewal, P.~Bachman, A.~Trischler, and Y.~Bengio, ``{Learning Deep Representations by Mutual Information Estimation and Maximization},'' in \emph{7th International Conference on Learning Representations, {ICLR} 2019, New Orleans, LA, USA, May 6-9, 2019}.\hskip 1em plus 0.5em minus 0.4em\relax OpenReview.net, 2019.

\bibitem{pp20}
R.~Dorent, S.~Joutard, M.~Modat, S.~Ourselin, and T.~Vercauteren, ``{Hetero-Modal Variational Encoder-Decoder for Joint Modality Completion and Segmentation},'' in \emph{Medical Image Computing and Computer Assisted Intervention--MICCAI 2019: 22nd International Conference, Shenzhen, China, October 13--17, 2019, Proceedings, Part II 22}.\hskip 1em plus 0.5em minus 0.4em\relax Springer, 2019, pp. 74--82.

\bibitem{pp24}
Q.~Yang, X.~Guo, Z.~Chen, P.~Y. Woo, and Y.~Yuan, ``{D2-Net: Dual Disentanglement Network for Brain Tumor Segmentation with Missing Modalities},'' \emph{IEEE Transactions on Medical Imaging}, vol.~41, no.~10, pp. 2953--2964, 2022.

\bibitem{p38}
B.~H. Menze, A.~Jakab, S.~Bauer, J.~Kalpathy-Cramer, K.~Farahani, J.~Kirby, Y.~Burren, N.~Porz, J.~Slotboom, R.~Wiest \emph{et~al.}, ``{The Multimodal Brain Tumor Image Segmentation Benchmark (BRATS)},'' \emph{{IEEE Transactions on Medical Imaging}}, vol.~34, no.~10, pp. 1993--2024, 2014.

\bibitem{p43}
A.~Paszke, S.~Gross, F.~Massa, A.~Lerer, J.~Bradbury, G.~Chanan, T.~Killeen, Z.~Lin, N.~Gimelshein, L.~Antiga \emph{et~al.}, ``{Pytorch: An Imperative Style, High-Performance Deep Learning Library},'' \emph{Advances in Neural Information Processing Systems}, vol.~32, 2019.

\bibitem{p41}
J.~Chen and A.~L. Martel, ``{Head and Neck Tumor Segmentation with 3D UNet and Survival Prediction with Multiple Instance Neural Network},'' in \emph{3D Head and Neck Tumor Segmentation in PET/CT Challenge}.\hskip 1em plus 0.5em minus 0.4em\relax Springer, 2022, pp. 221--229.

\bibitem{p42}
D.~P. Kingma and J.~Ba, ``{Adam: A Method for Stochastic Optimization},'' in \emph{3rd International Conference on Learning Representations, {ICLR} 2015, San Diego, CA, USA, May 7-9, 2015, Conference Track Proceedings}, Y.~Bengio and Y.~LeCun, Eds., 2015.

\bibitem{pp43}
L.~I. Rudin, S.~Osher, and E.~Fatemi, ``{Nonlinear total variation based noise removal algorithms},'' \emph{Physica D: Nonlinear Phenomena}, vol.~60, no. 1-4, pp. 259--268, 1992.

\bibitem{pp40}
V.~Kumar, K.~Vasan, and S.~Kumar, ``{Exact mean and variance of the squared Hellinger distance for random density matrices},'' \emph{Physical Review E}, vol. 111, no.~5, p. 054204, 2025.

\bibitem{pp41}
S.~Kullback and R.~A. Leibler, ``{On information and sufficiency},'' \emph{The Annals of Mathematical Statistics}, vol.~22, no.~1, pp. 79--86, 1951.

\bibitem{p50}
F.~Nielsen and K.~Okamura, ``{On the F-Divergences between Densities of a Multivariate Location or Scale Family},'' \emph{Statistics and Computing}, vol.~34, no.~1, p.~60, 2024.

\bibitem{pp42}
F.~Nielsen, ``{On a generalization of the Jensen--Shannon divergence and the Jensen--Shannon centroid},'' \emph{Entropy}, vol.~22, no.~2, p. 221, 2020.

\bibitem{pp38}
C.~F. Gauss, \emph{{Theoria motus corporum coelestium in sectionibus conicis solem ambientium}}.\hskip 1em plus 0.5em minus 0.4em\relax Perthes et Besser, 1809.

\bibitem{pp39}
C.~E. Shannon, ``{A mathematical theory of communication},'' \emph{The Bell System Technical Journal}, vol.~27, no.~3, pp. 379--423, 1948.

\end{thebibliography}

\end{document}